\newtheorem{prop}{Proposition}
\newtheorem{lemma}{Lemma}
\newtheorem{remark}{Remark}
\DeclareMathOperator{\diag}{diag}
\DeclareMathOperator{\mrank}{rank}
\title{\LARGE \bf
A Dynamic Mode Decomposition Approach \\ for Decentralized Spectral Clustering of Graphs}
\author{Hongyu Zhu$^{1}$, Stefan Klus$^{2}$, and Tuhin Sahai$^{1}$

\thanks{$^{1}$Raytheon Technologies Research Center, CT/CA, USA \\
        {\tt\small (HongyuAlice.Zhu@rtx.com, Tuhin.Sahai@rtx.com)}}%
\thanks{$^{2}$University of Surrey, UK
        {\tt\small (s.klus@surrey.ac.uk)}}%
}
\begin{document}

\maketitle
\thispagestyle{plain}
\pagestyle{plain}


\begin{abstract}
We propose a novel robust decentralized graph clustering algorithm that is \emph{provably} equivalent to the popular spectral clustering approach. Our proposed method uses the existing wave equation clustering algorithm that is based on propagating waves through the graph~\cite{sahai2010wave,SahaiSperanzonBanaszuk2011}. However, instead of using a fast Fourier transform (FFT) computation at every node, our proposed approach exploits the Koopman operator framework. Specifically, we show that propagating waves in the graph followed by a \emph{local} dynamic mode decomposition (DMD) computation at \emph{every} node is capable of retrieving the eigenvalues and the local eigenvector components of the graph Laplacian, thereby providing local cluster assignments for all nodes. We demonstrate that the DMD computation is more robust than the existing FFT based approach and requires 20 times fewer steps of the wave equation to accurately recover the clustering information and reduces the relative error by orders of magnitude. We demonstrate the decentralized approach on a range of graph clustering problems.
\end{abstract}


\section{Introduction}
Interconnected networks characterized by complex behavior due to interacting agents and subsystems can be represented by weighted graphs, with the weight indicating the strength of the connection between agents or nodes. Graph clustering methods that partition nodes into groups with strong intra-connections but weak inter-connections arises naturally in these networks to assist in decision making and co-ordination~\cite{sahai2010wave,SahaiSperanzonBanaszuk2011}. In particular, clustering can be used to accelerate distributed estimation and optimization in mobile sensor networks~\cite{SahaiSperanzonBanaszuk2011,ghiasi2002optimal}. The problem of efficiently computing clusters in large graphs also arises in anthropology~\cite{kottak2015cultural}, structural decomposition of complex systems~\cite{mezic2019spectral}, biology~\cite{speer2005functional,girvan2002community}, distributed computation of differential equations~\cite{klus2011efficient}, uncertainty quantification~\cite{surana2012iterative}, and the design of interconnected dynamical systems~\cite{banaszuk2011scalable}, 
to name a few.

Among the clustering methods, spectral clustering has emerged as a powerful tool of choice for graph decomposition \cite{von2007tutorial}. The method assigns nodes to clusters based on the signs of the elements of the eigenvectors of the graph Laplacian corresponding to increasing eigenvalues. For large scale dynamic networks, spectral clustering methods are either infeasible or suffer from slow convergence. A fast decentralized algorithm to cluster graphs was developed by propagating waves through the graph and retrieving the clustering information by a local fast Fourier transform (FFT) \cite{sahai2010wave, SahaiSperanzonBanaszuk2011}. However, the FFT assumes that all frequencies are rationally related, which is not true, in general, for the waves propagating through the graph~\cite{nussbaumer1981fast}. In the wave equation approach outlined in~\cite{sahai2010wave,SahaiSperanzonBanaszuk2011}, the frequencies depend on the eigenvalues of the graph Laplacian, which may be irrationally related. This typically leads to \emph{spectral leakage and noise} that results in the incorrect computation of cluster assignments for some nodes in the graph. Although the incorrect estimation of frequencies at some of the nodes can be mitigated using a voting scheme~\cite{DandachVoting2012}, the approach requires the correct spectral estimation by a majority of neighbors for every node. Given that the accurate estimation of \emph{critical frequencies} at any specific node depends on the eigenvalues of the Laplacian and number of updates of the wave equation, the approach may run into numerical issues.

In this work, we propose a local Dynamic Mode Decomposition (DMD) approach to retrieve the correct clustering information. DMD is an algorithm for computing Koopman modes for a dictionary containing linear basis functions \cite{tu2013dynamic, KNKWKSN18} and is more \emph{robust} than FFT based schemes. In particular, the approach does not assume that the eigenvalues are rationally related and does not suffer from spectral leakage. We build a DMD based distributed graph clustering algorithm that scales to large graphs and sensor networks, reduces the relative error by orders of magnitude, and significantly reduces the required number of wave equation time steps.

The remainder of this work is organized as follows: In Section~\ref{sec:decentralization}, we present the decentralized algorithm for graph clustering based on the dynamic mode decomposition. In Section~\ref{sec:numerics}, we present numerical results for a few graphs. In Section~\ref{sec:comparison}, we compare our new DMD approach with the existing FFT based method for decentralized clustering. Finally, conclusions are drawn in Section~\ref{sec:conclusion}. 

\section{Decentralized approach}\label{sec:decentralization}

\subsection{Spectral clustering}

Let $\mathcal{G}= (V, E)$ be a graph with vertex set $V=\{1,...,N\}$ and edge set $E \subseteq V \times V$, where a weight $\mathbf{W}_{ij}>0$ is associated with each edge $(i,j)\in E$ and $\mathbf{W}$ is the $N \times N$ weighted adjacency matrix of $\mathcal{G}$. Here $\mathbf{W}_{ij}=0$ if and only if $(i, j) \notin E$. The (normalized) graph Laplacian is defined as
\begin{equation}\label{eq:laplacian}
\mathbf{L}_{ij} = \left\{
\begin{array}{ll}
1, & \text{if } i=j, \\
-\mathbf{W}_{ij}/\sum_{l=1}^N\mathbf{W}_{il}, & \text{if } (i,j)\in E, \\
0,  & \text{otherwise}.
\end{array}
\right.
\end{equation}

Note that in this work we only consider undirected graphs, for which the Laplacian is symmetric and its eigenvalues are real. Eigenvalues of $\mathbf{L}$ can be ordered as $0=\lambda_1 \leq \lambda_2 \leq \dots \leq \lambda_N$ with associated eigenvectors $\mathbf{v}^{(1)}, \mathbf{v}^{(2)}, \dots, \mathbf{v}^{(N)}$, where $\mathbf{v}^{(1)}=\mathbf{1}=[1, 1,...,1]^T$, see \cite{von2007tutorial}. We assume $\lambda_1 < \lambda_2$, i.e., the graph does not have disconnected clusters. Spectral clustering divides the graph $\mathcal{G}$ into two clusters using the signs of the entries of the second eigenvector $\mathbf{v}^{(2)}$. More than two clusters can be computed using the signs of the entries of higher eigenvectors or $k$-means clustering, see~\cite{von2007tutorial} for details. Note that the number of clusters can be computed based on spectral gap heuristics.

\subsection{Wave equation based clustering method}
As in \cite{sahai2010wave,SahaiSperanzonBanaszuk2011}, we consider the wave equation,
\begin{equation}\label{eq:wave}
\frac{\partial^2 u}{\partial t^2} = c^2\Delta u.
\end{equation}
The discretized wave equation on a graph is given by,
\begin{equation}\label{eq:discrete_wave}
\mathbf{u}_i(t) = 2\mathbf{u}_i(t - 1) - \mathbf{u}_i(t - 2) - c^2 \sum_{j\in\mathcal{N}(i)} \mathbf{L}_{ij}\mathbf{u}_j(t-1),
\end{equation}
where $\mathcal{N}(i)$ is the set of neighbors of node $i$ including node $i$ itself \cite{friedman2004wave}. To update $\mathbf{u}_i$, one needs only the previous value of $\mathbf{u}_j$ at the neighboring nodes and the connecting edge weights.

As in Proposition 3.1 in \cite{SahaiSperanzonBanaszuk2011}, we can write \eqref{eq:discrete_wave} in matrix form as
\begin{equation}\label{eq:matrix_wave}
\underbrace{\left(\begin{array}{c}
\mathbf{u}(t) \\
\mathbf{u}(t-1)
\end{array}
\right)}_{\mathbf{z}(t)} = 
\underbrace{\left(\begin{array}{cc}
2\mathbf{I} - c^2\mathbf{L} & -\mathbf{I}\\
\mathbf{I}  & \mathbf{0}
\end{array}
\right)}_{\mathbf{M}}
\underbrace{\left(\begin{array}{c}
\mathbf{u}(t-1) \\
\mathbf{u}(t-2)
\end{array}
\right)}_{\mathbf{z}(t-1)},
\end{equation}
and the eigenvalues of $\mathbf{M}$ are,
\begin{equation}\label{eq:eigenvalues_M}
\alpha_{j_{1,2}}=\frac{2-c^2\lambda_j}{2} \pm \frac{c}{2}\sqrt{c^2\lambda_j^2-4\lambda_j},
\end{equation}
where the Laplacian eigenvalues $0 \leq\lambda_j \leq 2$ for all graphs \cite{SahaiSperanzonBanaszuk2011}. If $\mathbf{u}(-1)=\mathbf{u}(0)$ and $0<c<\sqrt{2}$, the eigenvalues $\alpha_j$ have absolute value equal to one, without repetition at $1$ and $-1$. It is interesting to note that the mapping between $\lambda_j$ and $\alpha_{j_{1,2}}$ is bijective. Also, any spectral gaps in the eigenvalues of the Laplacian map to gaps in the spectra of $\mathbf{M}$. Without loss of generality, we can assume
\begin{equation}\label{eq:alpha}
\alpha_{j_{1,2}} = e^{\pm i\omega_j}.
\end{equation}

\begin{lemma}\label{lemma:wave_solution}
Given the initial condition $\mathbf{u}(-1)=\mathbf{u}(0)$ and $0<c<\sqrt{2}$, the solution of the wave equation \eqref{eq:discrete_wave} can be written as
\begin{equation}\label{eq:u_discrete}
\mathbf{u}(t) = \sum_{j=1}^{N}\mathbf{u}(0)^T\mathbf{v}^{(j)}(p_{j}e^{it\omega_j} + q_{j}e^{-it\omega_j})\mathbf{v}^{(j)},
\end{equation}
where $p_{j}=(1+i\tan(\omega_j/2))/2$, $q_{j}=(1-i\tan(\omega_j/2))/2$.
\end{lemma}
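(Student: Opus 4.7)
The plan is to diagonalize the recursion in the orthonormal eigenbasis of $\mathbf{L}$, reducing the vector wave equation to $N$ independent scalar recurrences that can be solved in closed form. Since $\mathbf{L}$ is symmetric, I take $\{\mathbf{v}^{(j)}\}$ orthonormal and expand $\mathbf{u}(t) = \sum_{j=1}^N u_j(t)\,\mathbf{v}^{(j)}$ with $u_j(t) = \mathbf{u}(t)^T\mathbf{v}^{(j)}$. Substituting this expansion into \eqref{eq:discrete_wave} and using $\mathbf{L}\mathbf{v}^{(j)} = \lambda_j\mathbf{v}^{(j)}$ decouples the dynamics into $N$ scalar recurrences $u_j(t) = (2-c^2\lambda_j)\,u_j(t-1) - u_j(t-2)$, each carrying the inherited initial data $u_j(-1)=u_j(0) = \mathbf{u}(0)^T\mathbf{v}^{(j)} =: a_j$ supplied by the hypothesis $\mathbf{u}(-1)=\mathbf{u}(0)$.

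Each decoupled recurrence is a constant-coefficient linear difference equation whose characteristic polynomial $\alpha^2 - (2-c^2\lambda_j)\,\alpha + 1 = 0$ is exactly the polynomial whose roots are the eigenvalues $\alpha_{j_{1,2}} = e^{\pm i\omega_j}$ of $\mathbf{M}$ identified in \eqref{eq:eigenvalues_M}. Because $0<c<\sqrt{2}$ keeps these roots on the unit circle and distinct for the non-trivial modes, the general solution takes the form $u_j(t) = A_j e^{it\omega_j} + B_j e^{-it\omega_j}$ for scalars $A_j, B_j$ to be fixed by the two initial conditions.

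Imposing $u_j(0) = u_j(-1) = a_j$ produces the $2\times 2$ linear system $A_j + B_j = a_j$ and $A_j e^{-i\omega_j} + B_j e^{i\omega_j} = a_j$. Elimination gives $A_j = a_j\,(e^{i\omega_j}-1)/(e^{i\omega_j}-e^{-i\omega_j})$, and using the half-angle factorizations $e^{i\omega_j}-1 = 2i\,e^{i\omega_j/2}\sin(\omega_j/2)$ and $e^{i\omega_j}-e^{-i\omega_j} = 4i\sin(\omega_j/2)\cos(\omega_j/2)$ this collapses to $A_j = a_j\,e^{i\omega_j/2}/[2\cos(\omega_j/2)] = a_j(1+i\tan(\omega_j/2))/2 = a_j\,p_j$, and the conjugate computation yields $B_j = a_j\,q_j$. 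Re-summing $\mathbf{u}(t) = \sum_j u_j(t)\,\mathbf{v}^{(j)}$ then reproduces \eqref{eq:u_discrete} verbatim.

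The only step that requires real care is this final algebraic identification of $A_j, B_j$ with the stated $p_j, q_j$; the mode decoupling and characteristic-root step are essentially automatic once the spectral decomposition of $\mathbf{L}$ is in hand. The lone degenerate case is $\lambda_j = 0$ (so $\omega_j = 0$), where the two characteristic roots coalesce; it can be handled either as the limit of the stated formula — in which $p_j,q_j \to 1/2$ and the mode reduces to the constant $u_j(t)=a_j$ — or by directly verifying that the constant sequence $u_j \equiv a_j$ satisfies both the recurrence and the initial conditions.
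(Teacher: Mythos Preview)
Your proof is correct, but it takes a somewhat different route from the paper's. The paper lifts the second-order recursion to the first-order $2N$-dimensional system $\mathbf{z}(t)=\mathbf{M}\mathbf{z}(t-1)$, imports the explicit eigenvector structure of $\mathbf{M}$ from Proposition~3.3 of \cite{SahaiSperanzonBanaszuk2011}, writes $\mathbf{z}(t)$ as a real linear combination of $\mathbf{p}^{(j)}\cos(t\omega_j)\pm\mathbf{q}^{(j)}\sin(t\omega_j)$ with coefficients $C_{j_1},C_{j_2}$, extracts the $\mathbf{u}$ block, and then fixes $C_{j_1},C_{j_2}$ from $\mathbf{u}(-1)=\mathbf{u}(0)$ before converting to the exponential form. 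You instead stay in the $N$-dimensional second-order formulation, diagonalize directly in the eigenbasis of $\mathbf{L}$, and solve each scalar characteristic equation $\alpha^2-(2-c^2\lambda_j)\alpha+1=0$ in exponential form from the outset. Your approach is more self-contained (no reliance on an external proposition about the eigenvectors of $\mathbf{M}$) and arguably cleaner, while the paper's approach makes the connection to $\mathbf{M}$ and its spectrum in \eqref{eq:eigenvalues_M}--\eqref{eq:alpha} more explicit. Your handling of the degenerate $\omega_j=0$ case is also a nice addition that the paper omits.
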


\begin{proof}
Using Proposition 3.3 in \cite{SahaiSperanzonBanaszuk2011}, the corresponding eigenvectors to eigenvalues $\alpha_{j_{1,2}}$ are
\begin{equation*}
\mathbf{m}_{1,2}^{(j)} = \mathbf{p}^{(j)} \pm i\mathbf{q}^{(j)},
\end{equation*}
where
\begin{equation*}
\mathbf{p}^{(j)}=
\begin{pmatrix}
\mathbf{v}^{(j)}\cos \omega_j \\
\mathbf{v}^{(j)}
\end{pmatrix}, \,
\mathbf{q}^{(j)}=
\begin{pmatrix}
\mathbf{v}^{(j)}\sin \omega_j \\
0
\end{pmatrix},
\end{equation*}
and
$\mathbf{v}^{(j)}$ are orthonormal eigenvectors of $L$. 
The solution of \eqref{eq:matrix_wave} is
\begin{align*}
\mathbf{z}(t) = & \sum_{j=1}^{N}
C_{j_1}\left[\mathbf{p}^{(j)}\cos(t\omega_j) - \mathbf{q}^{(j)}\sin(t\omega_j)\right] \\
& ~\;+ C_{j_2}\left[\mathbf{p}^{(j)}\sin(t\omega_j) + \mathbf{q}^{(j)}\cos(t\omega_j)\right],
\end{align*}
as given in Proposition 3.3 in \cite{SahaiSperanzonBanaszuk2011}.
Thus,
\begin{equation}
\mathbf{u}(t) = \sum_{j=1}^{N} \left[C_{j_1}\cos((t+1)\omega_j) + C_{j_2}\sin((t+1)\omega_j) \right]\mathbf{v}^{(j)}. \nonumber
\end{equation}
Using $\mathbf{u}(-1)=\mathbf{u}(0)$, we obtain
\begin{equation*}
C_{j_1} (1-\cos(\omega_j)) = C_{j_2}\sin(\omega_j),
\end{equation*}
\begin{equation*}
C_{j_1}  = \mathbf{u}(0)^T \mathbf{v}^{(j)}, C_{j_2} = \mathbf{u}(0)^T \mathbf{v}^{(j)} \tan\left(\frac{\omega_j}{2}\right).
\end{equation*}
Therefore,
\begin{align*}
\mathbf{u}(t) &= \sum_{j=1}^{N} \mathbf{u}(0)^T \mathbf{v}^{(j)} \left(\cos(t\omega_j) - \tan\left(\frac{\omega_j}{2}\right)\sin(t\omega_j)\right) \mathbf{v}^{(j)} \\ &= \sum_{j=1}^{N}\mathbf{u}(0)^T\mathbf{v}^{(j)}(p_{j}e^{it\omega_j} + q_{j}e^{-it\omega_j})\mathbf{v}^{(j)},
\end{align*}
where
\begin{equation*}
    p_{j}=\frac{1}{2}\left(1+i\tan\left(\frac{\omega_j}{2}\right)\right), q_{j}=\frac{1}{2}\left(1-i\tan\left(\frac{\omega_j}{2}\right)\right).
\end{equation*}
\end{proof}
\begin{remark}
The above derivation corrects a small oversight in corresponding derivations in~\cite{sahai2010wave,SahaiSperanzonBanaszuk2011} by including the missing $\tan(\frac{\omega_j}{2})$ terms. The terms, however, do not impact the wave equation clustering algorithm or its performance. 
\end{remark}

In \cite{sahai2010wave,SahaiSperanzonBanaszuk2011}, the above derivation was exploited for clustering by evolving the wave equation (using nearest neighbor information) followed by a local FFT on $[\mathbf{u}_i(1),...,\mathbf{u}_i(T_{max})]$ at the $i$-th node.   $T_{max}$ was based on the minimum number of cycles to resolve the lowest frequency. Since the FFT coefficients are proportional to the elements of the eigenvectors of $\mathbf{L}$ by the same factor across the nodes, the signs of FFT coefficients provide the cluster assignment. The authors demonstrated that the wave equation based algorithm is orders of magnitude faster than random walk based approaches for graphs with large mixing times. However, FFT  requires  that  all  frequencies be rationally related, which is, in general, not true for arbitrary graphs. Additionally, the FFT algorithm works best when the length of the time series $T_{max}$ is a power of 2. For general $T_{max}$, it patches the time series with zeros to a length of closest power of 2, causing \emph{incorrect} frequencies to emerge. Practically, a threshold in signal magnitude is tuned to filter out the noisy frequencies and spectral leakage. Fig.~\ref{fig:karate_graph} shows the FFT result with $T_{max}=1000$ on the Zachary Karate club graph with $N=34$ nodes \cite{zachary1977information}.  The FFT plot~\ref{fig:karate_graph}(b) compares signal magnitudes at different nodes where, at each node, the maximum frequency magnitude is scaled to 1 and the threshold is set to be $1\%$. A greater $T_{max}$ than the theoretical value is usually necessary to avoid spectral leakage.

DMD~\cite{tu2013dynamic}, on the other hand, is a more robust algorithm that does not require the eigenvalues to be rationally related. It also typically requires fewer time steps to resolve the correct frequencies. We now explore the use of DMD in place of the standard FFT algorithm in the context of decentralized graph clustering.

\begin{figure}[h]
\centering
\subfigure[Karate graph]{\includegraphics[width=.52\linewidth]{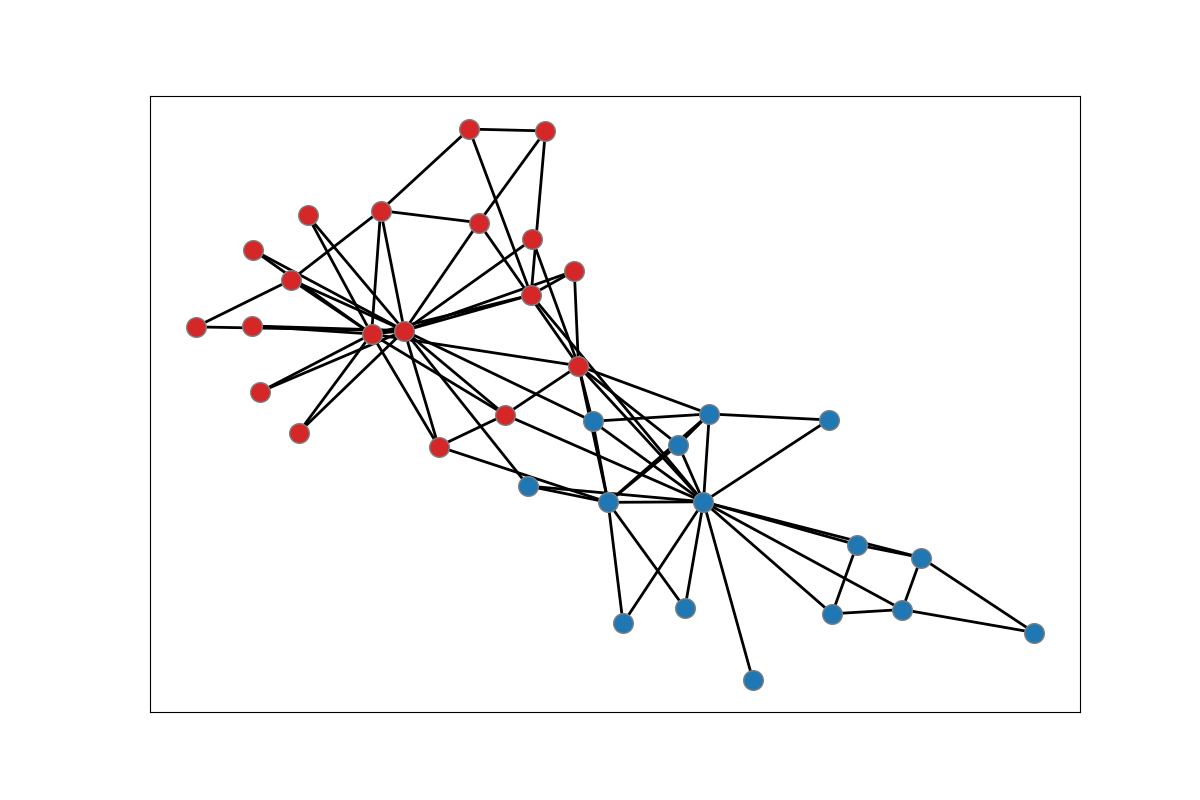}}
\subfigure[$T_{max}=1000$, threshold $1\%$ ]{\includegraphics[width=.45\linewidth]{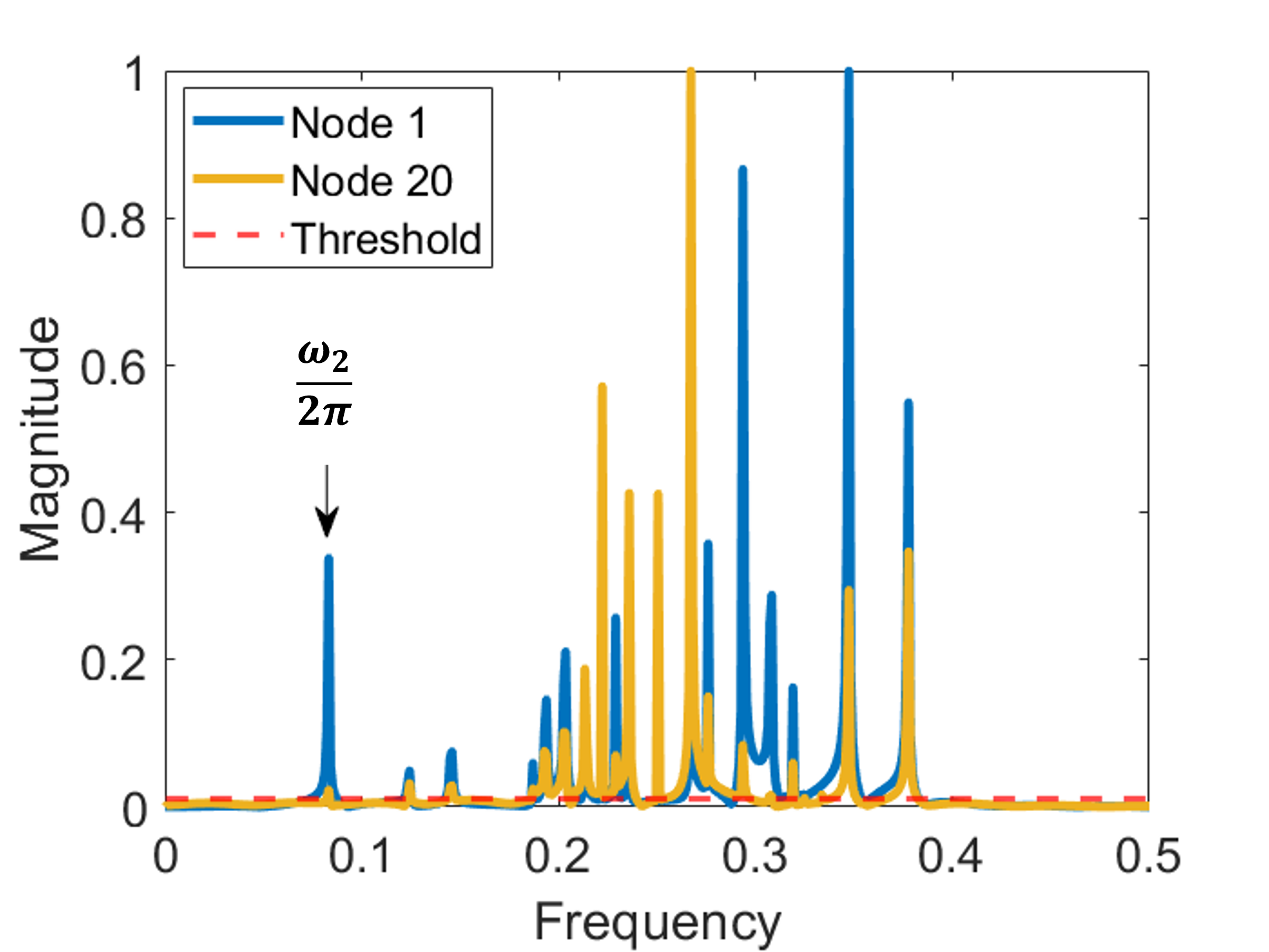}}
\caption{FFT of $[\mathbf{u}_i(1),...,\mathbf{u}_i(T_{max})]$ for Karate graph.}
\label{fig:karate_graph}
\end{figure}

\subsection{Dynamic Mode Decomposition for one-dimensional data}
Dynamic Mode Decomposition (DMD) is a powerful tool for analyzing the dynamics of nonlinear systems and can be interpreted as an approximation of Koopman modes \cite{tu2013dynamic, KNKWKSN18}. It is typically used as a centralized analysis approach on time series for all observation locations. Here, we apply DMD to one-dimensional time series data at a \emph{single} observation location by constructing the matrices $\mathbf{X}$ and $\mathbf{Y}$ for the exact DMD matrix $\mathbf{A}=\mathbf{Y}\mathbf{X}^+$ (where $^+$ denotes the pseudoinverse) using delay embeddings.

For any one-dimensional signal of the form
\begin{equation}\label{eq:1d_signal}
u(t) = \sum_{j=1}^J a_je^{i\omega_jt}
\end{equation}
where $\omega_j\in(-\pi, \pi), j=1,2,...,J$, are distinct frequencies,
the matrices $\mathbf{X}, \mathbf{Y} \in \mathbb{R}^{K\times M}$ where $K>1$ and $M>1$ can be defined by a time-delayed embedding of the signal,
\begin{align}\label{eq:Xmatrix}
\mathbf{X} &:=\begin{bmatrix}
u(0) & u(1) & ... & u(M-1) \\
u(1) & u(2) & ... & u(M) \\
\vdots & \vdots & \ddots & \vdots \\
u(K-1) & u(K) & ... & u(K+M-2)
\end{bmatrix}\\&=\begin{bmatrix}
\mathbf{x}(0) & \mathbf{x}(1) & \cdots & \mathbf{x}(M-1)
\end{bmatrix}, \nonumber
\end{align}
where $\mathbf{x}(t):=[u(t),u(t+1),...,u(t+K-1)]^T$,
\begin{equation}\label{eq:Ymatrix}
\mathbf{Y}:=\begin{bmatrix}
\mathbf{x}(1) & \mathbf{x}(2) & \cdots & \mathbf{x}(M)
\end{bmatrix}.
\end{equation}
Let $H > 1$, one can define a matrix  
\begin{align}\label{eq:Phi}
\boldsymbol\Phi_H &:=\begin{bmatrix}
1 & 1 & ... & 1 \\
e^{i\omega_1} & e^{i\omega_2} & ... & e^{i\omega_J} \\
\vdots & \vdots & \ddots & \vdots \\
e^{i(H-1)\omega_1} & e^{i(H-1)\omega_2} & ... & e^{i(H-1)\omega_J}
\end{bmatrix} \\
&= [\boldsymbol\phi_1, \boldsymbol\phi_2, \dots,\boldsymbol\phi_J], \nonumber
\end{align}
where $\boldsymbol\phi_j:=[1,e^{i\omega_j},...,e^{i(H-1)\omega_j}]^T$.

\begin{lemma}\label{lemma:delay_embedding_dmd}
For one-dimensional signal $u(t)$ defined by \eqref{eq:1d_signal}, if $K \geq J$ and $M \geq J$ of the matrices $\mathbf{X}$ and $\mathbf{Y}$ defined by \eqref{eq:Xmatrix} and \eqref{eq:Ymatrix} respectively, the eigenvalues of $\mathbf{A}=\mathbf{Y}\mathbf{X}^+$ are $\{e^{i\omega_j}\}_{j=1}^J$ and the columns of $\boldsymbol\Phi_{K}$ defined by \eqref{eq:Phi} are the corresponding eigenvectors. 
\end{lemma}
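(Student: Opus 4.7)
The plan is to express both $\mathbf{X}$ and $\mathbf{Y}$ in a factored form that exposes $\boldsymbol\Phi_K$ and the diagonal spectrum $\boldsymbol\Lambda := \diag(e^{i\omega_1},\dots,e^{i\omega_J})$ explicitly, and then to reduce $\mathbf{A}=\mathbf{Y}\mathbf{X}^+$ to the similarity $\boldsymbol\Phi_K\boldsymbol\Lambda\boldsymbol\Phi_K^+$ by exploiting full-rank properties of the factors. First, I would substitute \eqref{eq:1d_signal} into $\mathbf{x}(t)=[u(t),\dots,u(t+K-1)]^T$ to get
\begin{equation*}
\mathbf{x}(t)=\sum_{j=1}^{J} a_j e^{i\omega_j t}\boldsymbol\phi_j = \boldsymbol\Phi_K\,\mathbf{d}(t),\qquad \mathbf{d}(t):=[a_1 e^{i\omega_1 t},\dots,a_J e^{i\omega_J t}]^T.
\end{equation*}
The shift identity $\mathbf{d}(t+1)=\boldsymbol\Lambda\,\mathbf{d}(t)$ then gives the factorizations $\mathbf{X}=\boldsymbol\Phi_K\mathbf{D}$ and $\mathbf{Y}=\boldsymbol\Phi_K\boldsymbol\Lambda\mathbf{D}$, where $\mathbf{D}:=[\mathbf{d}(0),\dots,\mathbf{d}(M-1)]$.

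Next I would invoke the Vandermonde structure. Because the frequencies $\omega_j\in(-\pi,\pi)$ are distinct, any $J$ rows of $\boldsymbol\Phi_K$ form a $J\times J$ Vandermonde block with nonzero determinant, so $\boldsymbol\Phi_K$ has full column rank $J$ whenever $K\ge J$. Similarly, $\mathbf{D}=\diag(a_1,\dots,a_J)\,V$ where $V_{jk}=e^{i\omega_j k}$; under the (implicit) assumption $a_j\neq 0$ and $M\ge J$, the matrix $\mathbf{D}$ has full row rank $J$. Consequently $\boldsymbol\Phi_K^+\boldsymbol\Phi_K=\mathbf{I}_J$ and $\mathbf{D}\mathbf{D}^+=\mathbf{I}_J$, and the pseudoinverse product rule $(\boldsymbol\Phi_K\mathbf{D})^+=\mathbf{D}^+\boldsymbol\Phi_K^+$ applies (the standard sufficient condition: left factor has full column rank and right factor has full row rank). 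Chaining these identities,
\begin{equation*}
\mathbf{A} = \mathbf{Y}\mathbf{X}^+ = \boldsymbol\Phi_K\boldsymbol\Lambda\mathbf{D}\,\mathbf{D}^+\boldsymbol\Phi_K^+ = \boldsymbol\Phi_K\boldsymbol\Lambda\boldsymbol\Phi_K^+.
\end{equation*}

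Finally, right-multiplying by $\boldsymbol\Phi_K$ yields $\mathbf{A}\boldsymbol\Phi_K = \boldsymbol\Phi_K\boldsymbol\Lambda$, which is exactly the claim that each column $\boldsymbol\phi_j$ is an eigenvector of $\mathbf{A}$ with eigenvalue $e^{i\omega_j}$; distinctness of the $\omega_j$ makes these $J$ eigenpairs linearly independent and accounts for the entire nonzero spectrum of the rank-$J$ matrix $\mathbf{A}$ (any remaining $K-J$ eigenvalues are zero and correspond to $\ker\boldsymbol\Phi_K^+$). The only non-bookkeeping step, and the one I expect to require the most care, is justifying $(\boldsymbol\Phi_K\mathbf{D})^+=\mathbf{D}^+\boldsymbol\Phi_K^+$, since this Moore--Penrose factorization fails for arbitrary products; the Vandermonde full-rank argument driven by the hypotheses $K\ge J$ and $M\ge J$ is precisely what legitimizes it here.
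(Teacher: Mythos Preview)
Your proof is correct and follows essentially the same route as the paper: both factor $\mathbf{X}=\boldsymbol\Phi_K\diag(a_j)\boldsymbol\Phi_M^T$ and $\mathbf{Y}=\boldsymbol\Phi_K\diag(a_je^{i\omega_j})\boldsymbol\Phi_M^T$ (your $\mathbf{D}$ is exactly $\diag(a_j)\boldsymbol\Phi_M^T$), invoke the Vandermonde full-rank property under $K,M\ge J$, and conclude $\mathbf{A}=\boldsymbol\Phi_K\boldsymbol\Lambda\boldsymbol\Phi_K^+$. If anything, you are more explicit than the paper about the conditions under which $(\boldsymbol\Phi_K\mathbf{D})^+=\mathbf{D}^+\boldsymbol\Phi_K^+$ holds and about the implicit assumption $a_j\neq 0$.
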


\begin{proof}
The matrix $\mathbf{X}$ can be decomposed as
\begin{equation}\label{eq:decomposition_X}
    \mathbf{X}(u) = \boldsymbol\Phi_{K} \diag(a_j) \boldsymbol\Phi_{M}^T,
\end{equation}
where $\diag(a_j)$ is a diagonal matrix with $a_j, j=1,2,\dots,J$, on the diagonal. Similarly, the matrix $\mathbf{Y}$ can be decomposed into
\begin{equation}\label{eq:decomposition_Y}
    \mathbf{Y}(u) = \boldsymbol\Phi_{K} \diag(a_j e^{i\omega_j}) \boldsymbol\Phi_{M}^T.
\end{equation}
The transpose of $\boldsymbol\Phi_{H}$ constructed from distinct $\omega_j$ is a Vandermonde matrix, and thus $\boldsymbol\Phi_{H}$ has the property that $\mrank(\boldsymbol\Phi_H)=\min(J, H)$ \cite{pan2020structure}. It follows that for $H\geq J$, $\boldsymbol\Phi_{H}$ has full column rank and $\boldsymbol\Phi_{H}^T$ has full row rank. Thus, if $K\geq J$, the matrix of the exact DMD
\begin{equation}\label{eq:A}
    \mathbf{A}=\mathbf{Y}\mathbf{X}^+ = \boldsymbol\Phi_{K} \diag(e^{i\omega_j})\boldsymbol\Phi_{K}^+,
\end{equation}
i.e., $\mathbf{A} \boldsymbol\Phi_{K} = \boldsymbol\Phi_{K} \diag(e^{i\omega_j})$. Thus, $\{e^{i\omega_j}\}_{j=1}^J$ are the eigenvalues of $\mathbf{A}$ and the columns of $\boldsymbol\Phi_{K}$ are the corresponding eigenvectors.
\end{proof}

\begin{prop}
 At node $l$, Dynamic Mode Decomposition on matrices $\mathbf{X}(u_l)$, $\mathbf{Y}(u_l)$ using local snapshots $\mathbf{u}_l(0),\mathbf{u}_l(1),...,\mathbf{u}_l(4N - 1)$ defined by \eqref{eq:u_discrete} with $K=M=2N$ yields exact eigenvalues of the Laplacian and the corresponding eigenvectors (scaled).
\end{prop}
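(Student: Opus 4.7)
The plan is to reduce the proposition to Lemma~\ref{lemma:delay_embedding_dmd} by showing that the scalar time series $u_l(t)$ observed at node $l$ is exactly of the form \eqref{eq:1d_signal} with at most $2N$ distinct frequencies, and then translating DMD eigenvalues back to Laplacian eigenvalues through the map \eqref{eq:eigenvalues_M}.

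First I would take the $l$-th component of the closed-form solution in Lemma~\ref{lemma:wave_solution} to write
\begin{equation*}
u_l(t) = \sum_{j=1}^{N}\left(\mathbf{u}(0)^T\mathbf{v}^{(j)}\right) v^{(j)}_l\left(p_{j}\,e^{it\omega_j} + q_{j}\,e^{-it\omega_j}\right).
\end{equation*}
This is a sum of complex exponentials whose distinct frequencies lie in $\{\pm\omega_j\}_{j=1}^N\subset(-\pi,\pi)$. The only coincidence comes from $\lambda_1 = 0$, which forces $\omega_1=0$ and merges the two terms with $j=1$ into a single DC term; all other pairs $\pm\omega_j$ are distinct by the assumption $\lambda_1<\lambda_2$ and by the fact noted after \eqref{eq:eigenvalues_M} that $\alpha_j$ has no repetitions at $\pm 1$. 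Thus the number of distinct frequencies is $J\le 2N-1\le 2N$.

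Second, with $K=M=2N\ge J$, I would invoke Lemma~\ref{lemma:delay_embedding_dmd} directly on $(\mathbf{X}(u_l),\mathbf{Y}(u_l))$. Its conclusion states that the eigenvalues of $\mathbf{A}=\mathbf{Y}(u_l)\mathbf{X}(u_l)^+$ are precisely $\{e^{i\omega}\}$ over the set of distinct frequencies present, i.e. exactly $\{e^{\pm i\omega_j}\}_{j=1}^N$. Using the bijection between $\lambda_j$ and $\alpha_{j_{1,2}}$ from \eqref{eq:eigenvalues_M} and \eqref{eq:alpha}, I would then recover each Laplacian eigenvalue via $\lambda_j = 2(1-\cos\omega_j)/c^2$. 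For the eigenvectors, I would observe that after grouping the exponentials in $u_l(t)$ by distinct frequency, the amplitude at frequency $+\omega_j$ equals $\left(\mathbf{u}(0)^T\mathbf{v}^{(j)}\right) p_{j}\,v_l^{(j)}$, and the prefactor $\left(\mathbf{u}(0)^T\mathbf{v}^{(j)}\right) p_{j}$ is independent of $l$. Hence the vector of DMD amplitudes for a fixed frequency, assembled across nodes, is the Laplacian eigenvector $\mathbf{v}^{(j)}$ up to a global (node-independent) scalar, which is all spectral clustering requires.

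The main obstacle I anticipate is bookkeeping around frequency multiplicities and the non-degeneracy of the initial condition. Specifically, I must verify that (i) the counting $J\le 2N$ holds even when the Laplacian has repeated eigenvalues or when some $\omega_j$ hits $\pi$ (both of which only shrink $J$ and so preserve $K=M=2N\ge J$), and (ii) the initial condition is generic enough that $\mathbf{u}(0)^T\mathbf{v}^{(j)}\neq 0$ for every $j$; otherwise the corresponding mode is never excited, that frequency is absent from the signal, and the DMD spectrum of $\mathbf{A}$ legitimately misses some $e^{i\omega_j}$. Everything else is a direct chaining of Lemma~\ref{lemma:wave_solution} into Lemma~\ref{lemma:delay_embedding_dmd} followed by the algebraic inversion \eqref{eq:eigenvalues_M}.
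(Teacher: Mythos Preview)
Your proposal is correct and follows essentially the same route as the paper: invoke Lemma~\ref{lemma:wave_solution} to write $u_l(t)$ as a sum of at most $2N$ complex exponentials, apply Lemma~\ref{lemma:delay_embedding_dmd} with $K=M=2N\ge J$ to recover the $e^{\pm i\omega_j}$ as DMD eigenvalues, invert via \eqref{eq:eigenvalues_M}--\eqref{eq:alpha} to get the $\lambda_j$, and read off that the amplitude at frequency $\omega_j$ is $p_j\bigl(\mathbf{u}(0)^T\mathbf{v}^{(j)}\bigr)v_l^{(j)}$, proportional to $v_l^{(j)}$ by a node-independent factor. Your additional bookkeeping on the $\omega_1=0$ coincidence, repeated Laplacian eigenvalues, and the genericity requirement $\mathbf{u}(0)^T\mathbf{v}^{(j)}\neq 0$ is more careful than the paper's own treatment, which tacitly assumes these issues away.
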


\begin{proof}
By Proposition~\ref{lemma:wave_solution}, the number of modes $J\leq2N$ where $N$ is the number of graph nodes. By Proposition~\ref{lemma:delay_embedding_dmd}, the eigenvalues of $\mathbf{A(\mathbf{u}_l)}=\mathbf{Y(\mathbf{u}_l)}\mathbf{X(\mathbf{u}_l)}^+$ are $\{e^{i\omega_j}\}_{j=1}^J$ and the columns of $\boldsymbol\Phi_{K}$ defined by \eqref{eq:Phi} are the corresponding eigenvectors. The eigenvalues of the Laplacian can be derived from \eqref{eq:eigenvalues_M} and \eqref{eq:alpha} and given by $\lambda_j = (2 - e^{i\omega_j} - e^{-i\omega_j})/c^2$.
From \eqref{eq:Xmatrix} and \eqref{eq:decomposition_X}
\begin{equation}
\mathbf{x}(t) = \boldsymbol\Phi_{K} \diag(e^{i\omega_j t})\mathbf{a}, \nonumber
\end{equation}
where $\mathbf{a} := [a_1, a_2,\dots,a_J]^T$. Thus, $\mathbf{a}$ can be computed by solving the system of linear equations $\mathbf{x}(0)=\boldsymbol\Phi_{K}\mathbf{a}$.
From \eqref{eq:u_discrete}, the coefficients $\{a_j^{(l)}\}_{l=1}^{N}$ corresponding to $e^{i\omega_j}$ are proportional to $\mathbf{v}^{(j)}$
by a factor $p_j \mathbf{u}(0)^T \mathbf{v}^{(j)}$. 
\end{proof}

Note that numerically, the eigenvectors are usually normalized, i.e., $\hat{\boldsymbol\Phi}_{K} = [\hat{\phi}_1, \hat{\phi}_2,\dots,\hat{\phi}_J]$, $\hat{\phi}_j = \phi_j/\|\phi_j\|_2$. Since $\phi_{j,1}=1$ implies $\hat{\phi}_{j,1}=1/\|\phi_j\|_2$, one can solve for $\hat{\mathbf{a}}$ using $\mathbf{x}(0)=\hat{\boldsymbol\Phi}_{K} \hat{\mathbf{a}}$ and $a_j=\hat{\phi}_{j,1}\hat{a}_j$.

The method to compute coefficients $\mathbf{a}$ is shown in Algorithm~\ref{alg:dmd} and decentralized clustering in Algorithm~\ref{alg:clustering}. 

\begin{algorithm}
\begin{algorithmic}[1]
\State Compute (reduced) SVD of $\mathbf{X}$, i.e., $\mathbf{X}=\mathbf{U}\mathbf{\Sigma}\mathbf{V}^*$.
\State Define the matrix $\tilde{\mathbf{A}} \triangleq \mathbf{U}^*\mathbf{Y}\mathbf{V}\mathbf{\Sigma}^{-1}$.
\State Compute eigenvalues $\mu$ and eigenvectors $w$ of $\tilde{\mathbf{A}}$, i.e., $\tilde{\mathbf{A}}w = \mu w$. Each nonzero eigenvalue $\mu$ is a DMD eigenvalue.
\State The DMD mode corresponding to $\mu$ is then given by $\hat{\boldsymbol\phi}\triangleq\frac{1}{\mu}\mathbf{Y}\mathbf{V}\mathbf{\Sigma}^{-1}w$.
\State Compute $\hat{\mathbf{a}}$ by solving the linear system $\hat{\boldsymbol\Phi} \hat{\mathbf{a}}=\mathbf{x}(0)$, where the columns of $\hat{\boldsymbol\Phi}$ are the eigenvectors sorted in decreasing order based on the the real part of the eigenvalues.
\State $a_i^{(j)}=\hat{\boldsymbol\phi}_{j,1}\hat{a}_j, j=1,2,...,k$.
\end{algorithmic}
\caption{DMD$(\mathbf{X}, \mathbf{Y})$: For computing eigenvalues and eigenvector components at node $i$. }
\label{alg:dmd}
\end{algorithm}

\begin{algorithm}
\begin{algorithmic}[1]
\State $\mathbf{u}_i(0)\leftarrow \text{Random}([0,1])$
\State $\mathbf{u}_i(-1) \leftarrow \mathbf{u}_i(0)$
\State $t \leftarrow 1$
\While{$t < T_{max}$} 
    \State $\mathbf{u}_i(t) \leftarrow 2\mathbf{u}_i(t - 1) - \mathbf{u}_i(t - 2) - c^2 \sum\limits_{\mathclap{j\in\mathcal{N}(i)}} \mathcal{L}_{ij}\mathbf{u}_j(t-1)$
    \State $t \leftarrow t+1$
\EndWhile 
\State Create the matrices $\mathbf{X}_i, \mathbf{Y}_i \in \mathbb{R}^{K\times M} $ defined by \eqref{eq:Xmatrix} and \eqref{eq:Ymatrix} at Node $i$ using $\mathbf{u}_i(0),\mathbf{u}_i(1),...,\mathbf{u}_i(T_{max}-1)$ where $K+M=T_{max}$
\State $\mathbf{v}_i \leftarrow a_i$ from DMD$(\mathbf{X}_i,\mathbf{Y}_i)$ by Algorithm~\ref{alg:dmd} 
\For{$j\in\{1,\cdots,k\}$}
	\If{$\mathbf{v}_i^{(j)}>0$}
		\State $A_j\leftarrow 1$
	\Else
		\State $A_j\leftarrow 0$
	\EndIf
\EndFor
\State $\text{ClusterNumber}\leftarrow\sum\limits_{j=1}^{k}A_j2^{j-1}$
\end{algorithmic}
\caption{Wave equation based graph clustering}
\label{alg:clustering}
\end{algorithm}

\section{Numerical results}\label{sec:numerics}
\subsection{A line graph}

In this example, we consider the simple line graph shown in Figure~\ref{fig:line_graph} comprising $N=50$ nodes. Edge $(25,26)$ has weight 1 while all other edges have weight 5. Both centralized spectral clustering and the wave equation based decentralized clustering predict two clusters cut at the weakly connected edge $(25,26)$. The elements of the 2\textsuperscript{nd} Laplacian eigenvector $\mathbf{v}^{(2)}$ are shown in Figure~\ref{fig:line_clustering}(a) and the coefficients derived from DMD are shown in Figure~\ref{fig:line_clustering}(b). The matrices $\mathbf{X}$ and $\mathbf{Y}$ of size $100\times100$ are constructed at each node by local snapshots $\mathbf{u}_i(0),\mathbf{u}_i(1),...,\mathbf{u}_i(199)$. The proposed method predicts the same clustering as the spectral method. DMD coefficients are proportional (here the constant of proportionality can be negative) to the second eigenvector of Laplacian $\mathbf{v}^{(j)}$, thus, the clustering is the same. Note that, for the line graph, the eigenvalues of $\mathbf{M}$ are computed perfectly by our proposed approach, when compared to closed form solutions. The eigenvalues and the first 5 eigenvectors are shown in Figure~\ref{fig:line_dmd_eigen}. 

\begin{figure}[h]
\centering
\includegraphics[width=.9\linewidth]{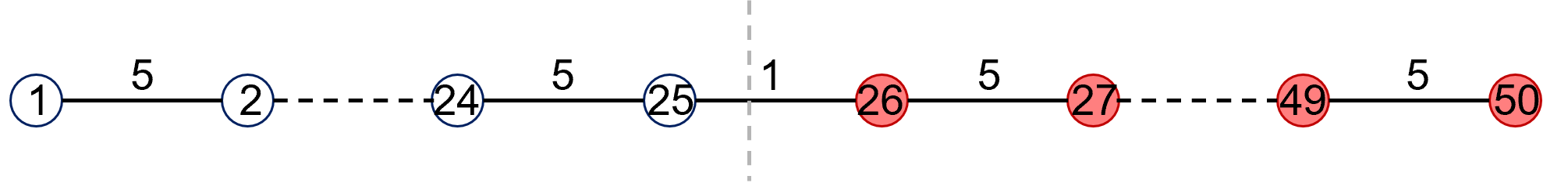}
\caption{Spectral clustering of a line graph with a weak connection.}
\label{fig:line_graph}
\end{figure}

\begin{figure}[h]
\centering
\subfigure[Laplacian 2\textsuperscript{nd} eigenvector]{\includegraphics[width=.4\linewidth]{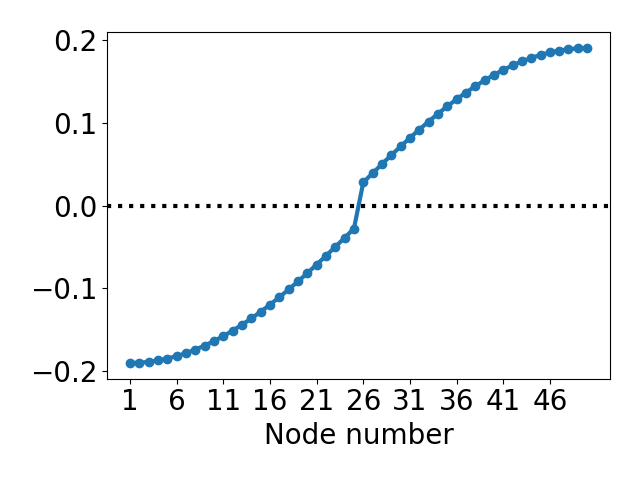}}
\subfigure[DMD coefficients of $e^{i\omega_2}$]{\includegraphics[width=.4\linewidth]{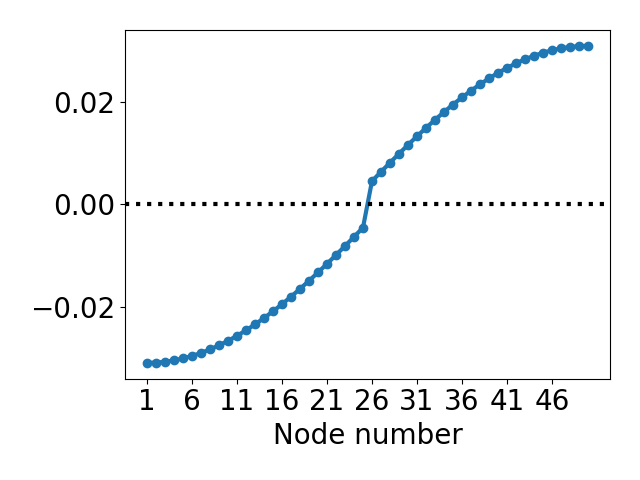}}
\caption{Clustering of the line graph shown in Figure~\ref{fig:line_graph}. The signs indicate clustering.}
\label{fig:line_clustering}
\end{figure}

\begin{figure}[h]
\centering
\subfigure[Eigenvalues]{\includegraphics[width=.33\linewidth]{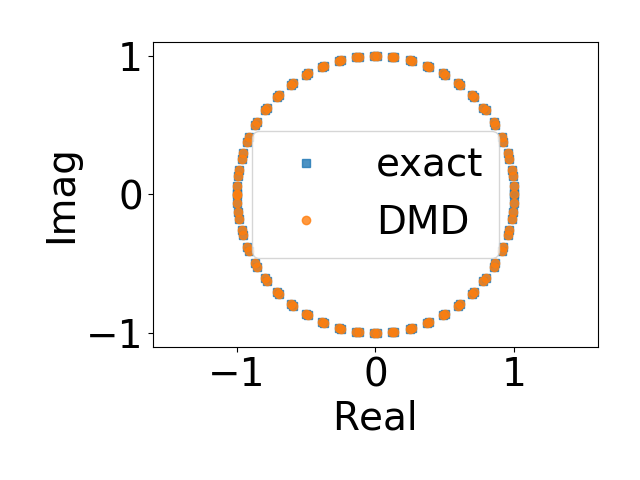}}
\subfigure[Modes: $\Re(\hat{\phi}_j)$]{\includegraphics[width=.32\linewidth]{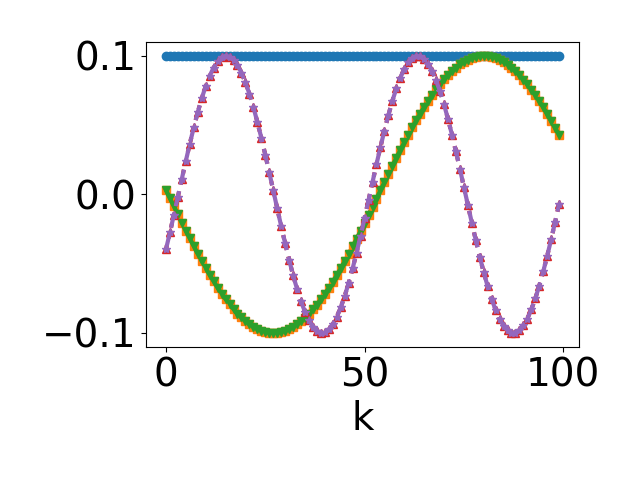}}
\subfigure[Modes: $\Im(\hat{\phi}_j)$]{\includegraphics[width=.32\linewidth]{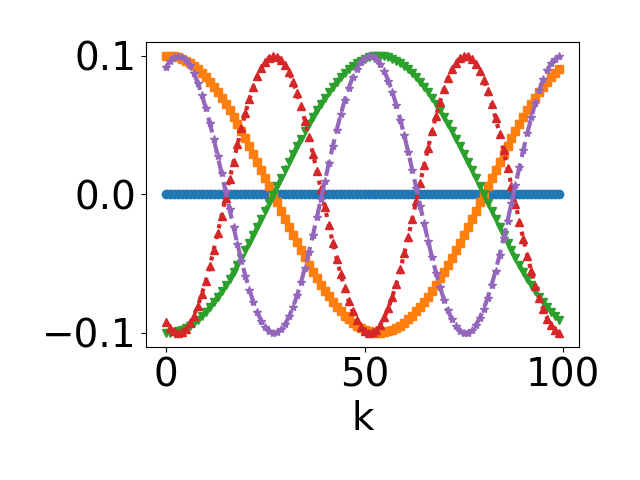}}
\caption{DMD eigenvalues and modes $\hat{\phi}_j$ at Node 1 for wave propagation on the line graph, j=0 (blue), 1 (orange), 2 (green), 3 (red) and 4 (purple). The exact eigenvalues are the eigenvalues of the matrix $\mathbf{M}$ defined in \eqref{eq:matrix_wave}.}
\label{fig:line_dmd_eigen}
\end{figure}

\subsection{Zachary Karate club graph}
We now demonstrate our algorithm on the Zachary Karate club graph. The entries of the 2\textsuperscript{nd} Laplacian eigenvector $\mathbf{v}^{(2)}$ are shown in Figure~\ref{fig:karate_clustering_v2}(a) and the coefficients derived from DMD are shown in Figure~\ref{fig:karate_clustering_v2}(b). Instead of $2N\times 2N$, we construct reduced matrices $\mathbf{X}$ and $\mathbf{Y}$ of size $25\times25$ by using local snapshots $\mathbf{u}_i(0),\mathbf{u}_i(1),...,\mathbf{u}_i(49)$. The eigenvalues and the first 5 eigenvectors are shown in Figure~\ref{fig:karate_dmd_eigen}. With reduced snapshots, we can still compute the dominant eigenvalues and predict the same clustering as the spectral method. Currently, the number of reduced snapshots and matrix size are determined empirically. In Figure~\ref{fig:karate_clustering_v2} (and subsequent figures), the ``exact'' eigenvalues refer to the direct computation of eigenvalues of $\mathbf{M}$ for comparison.

\begin{figure}[h]
\centering
\subfigure[Laplacian 2\textsuperscript{nd} eigenvector]{\includegraphics[width=.4\linewidth]{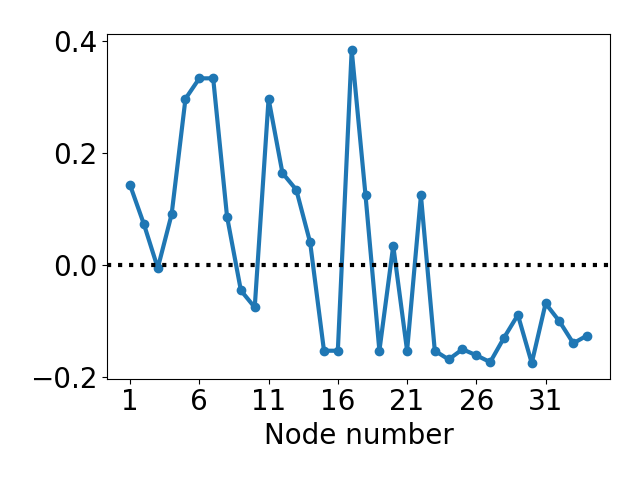}}
\subfigure[DMD$(\mathbf{X},\mathbf{Y})$]{\includegraphics[width=.4\linewidth]{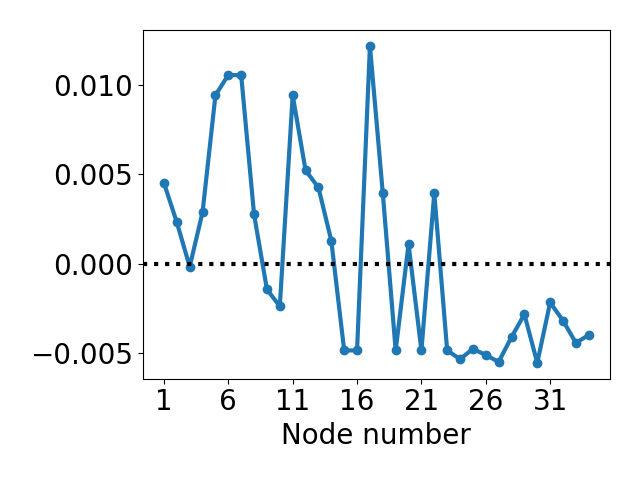}}
\caption{The second eigenvector of the Karate graph shown in Figure~\ref{fig:karate_graph}. The signs of the eigenvector entries indicate assignment of the corresponding node into one of two clusters.}
\label{fig:karate_clustering_v2}
\end{figure}

\begin{figure}[h]
\centering
\subfigure[Eigenvalues]{\includegraphics[width=.33\linewidth]{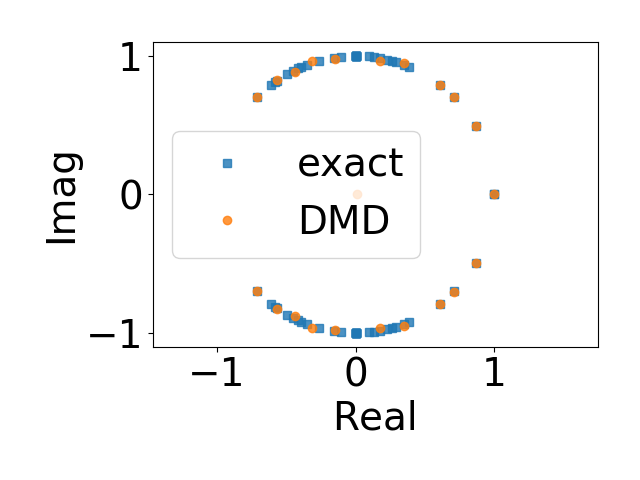}}
\subfigure[Modes: $\Re(\hat{\phi}_j)$]{\includegraphics[width=.32\linewidth]{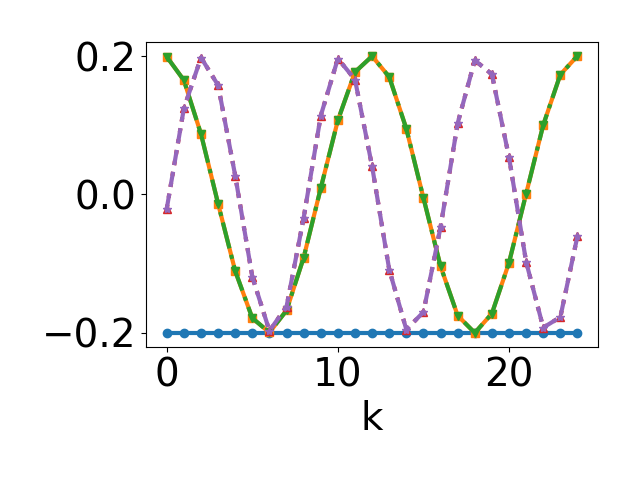}}
\subfigure[Modes: $\Im(\hat{\phi}_j)$]{\includegraphics[width=.32\linewidth]{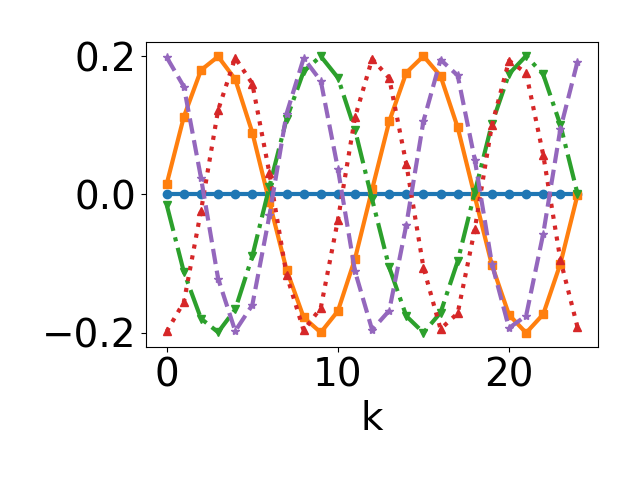}}
\caption{DMD eigenvalues and modes $\hat{\phi}_j$ at Node 1 for wave propagation on the Karate graph, j=0 (blue), 1 (orange), 2 (green), 3 (red) and 4 (purple). The exact eigenvalues are the eigenvalues of $\mathbf{M}$ in \eqref{eq:matrix_wave}.}
\label{fig:karate_dmd_eigen}
\end{figure}

\subsection{A synthetic network}

In this example, we demonstrate our algorithm on a synthetic network,  with 400 nodes and 2335 edges (weights varying between 0 and 2) that has 4 clusters as shown in Fig.~\ref{fig:synthetic_4clusters}. We construct reduced matrices $\mathbf{X}$ and $\mathbf{Y}$ of size $70\times430$ as the input for DMD. The spectral gap between $\omega_4$ and $\omega_5$, shown in Fig.~\ref{fig:eigenvalues_synthetic}, indicates that the number of clusters is four. Our algorithm gives the same clustering result as the standard spectral clustering method as shown in Fig.~\ref{fig:eigenvalues_synthetic}(b). Note that two nodes with eigenvector entries close to 0 (circled in Fig.~\ref{fig:synthetic_clustering_v2}(a) and Fig.~\ref{fig:synthetic_clustering_v3}(a)) are assigned to incorrect clusters. Instead of following the steps in lines 10--17 in Algorithm~\ref{alg:clustering}, one can cluster the points $\mathbf{v}_i\in \mathbb{R}^k, i=1,2,...,N$ in line 9, into $k$ clusters using the distributed $k$-means algorithm~\cite{balcan2013distributed}. Note that if we let $\mathbf{V}$ be the matrix containing the eigenvectors $\mathbf{v}^{(j)}, j=1,2,...,k$, as columns, then $\mathbf{v}_i$ is the $i$-th row of $\mathbf{V}$. The correct clustering result is recovered using $k$-means with $k=4$ for this example as shown in Fig.~\ref{fig:synthetic_4clusters}.

\begin{figure}
\centering
\includegraphics[width=.5\linewidth]{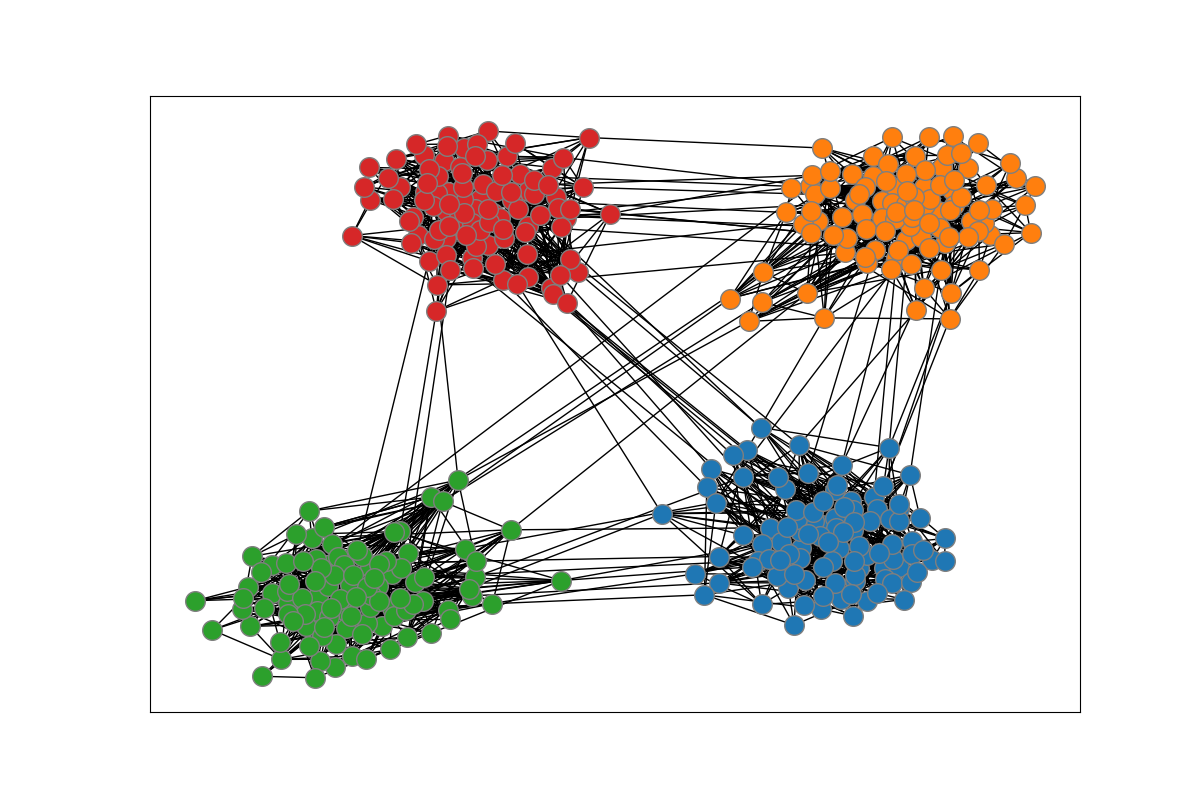}
\caption{A synthetic network with 4 clusters.}
\label{fig:synthetic_4clusters}
\end{figure}

\begin{figure}
\centering
\subfigure[Laplacian eigenvalues]{\includegraphics[width=.4\linewidth]{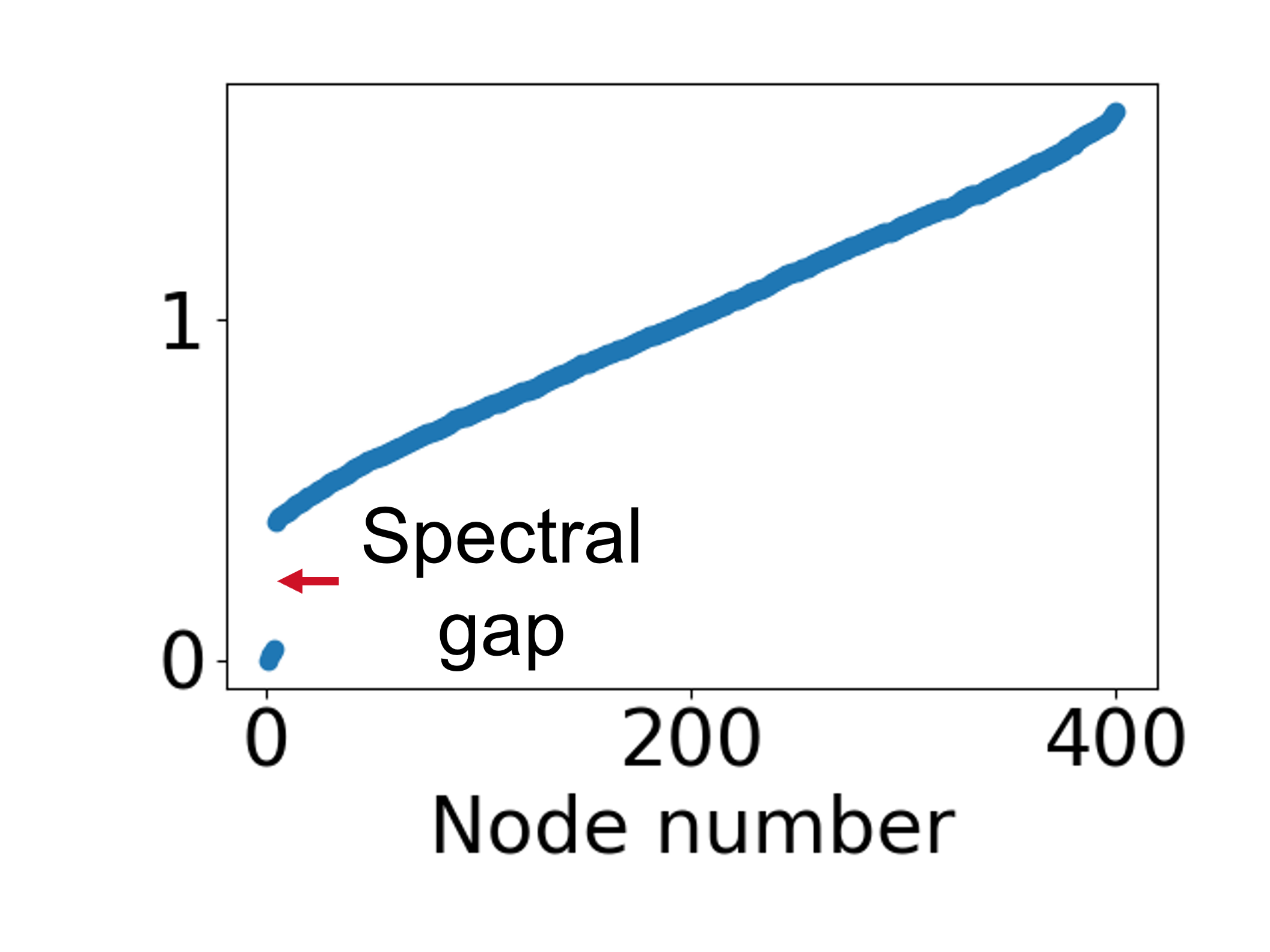}}
\subfigure[DMD eigenvalues]{\includegraphics[width=.4\linewidth]{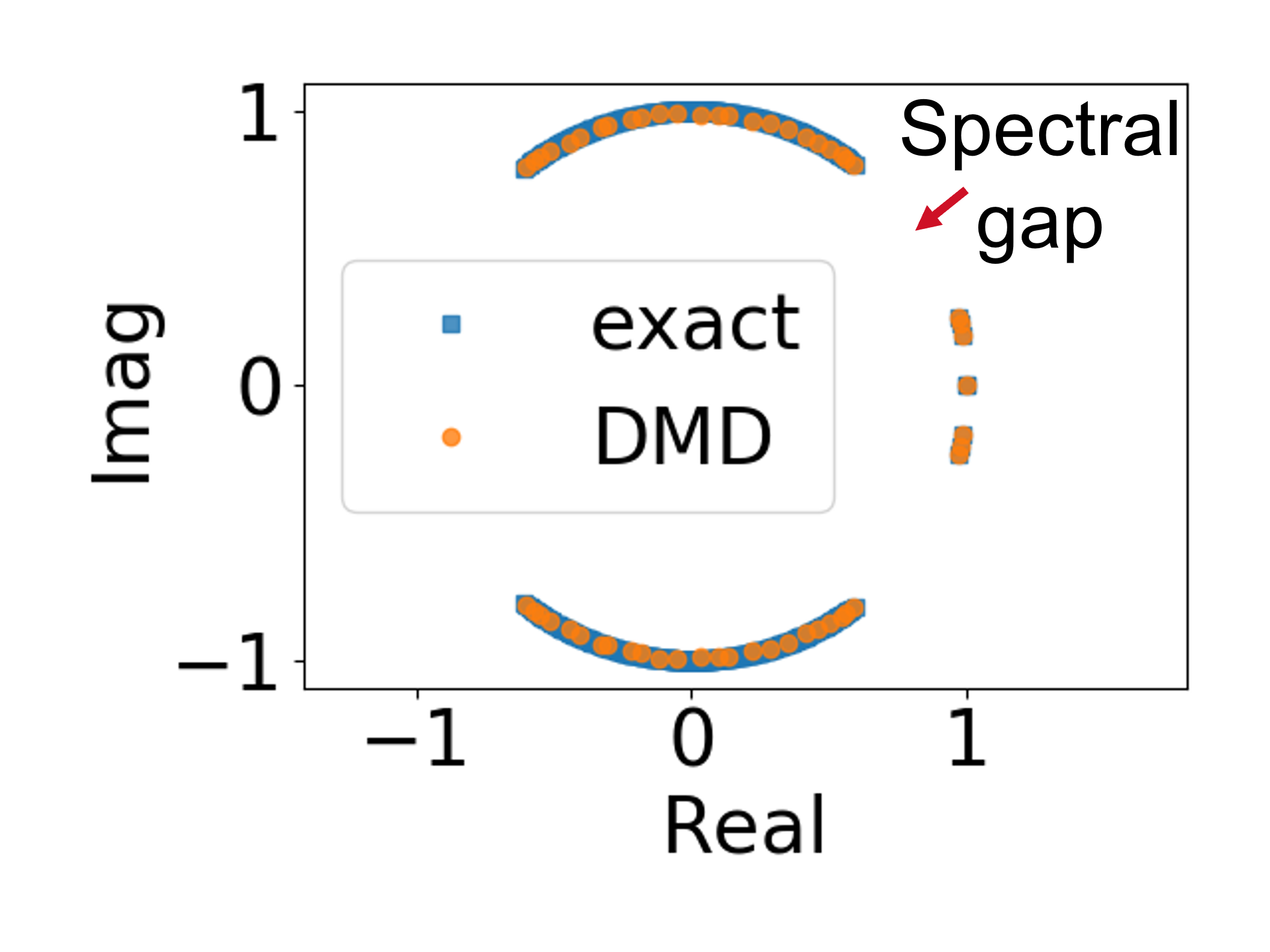}}
\caption{Laplacian and DMD eigenvalues for the synthetic example. The spectral gap between $\omega_4$ and $\omega_5$ indicates the number of clusters to be four. The exact eigenvalues are the eigenvalues of the matrix $\mathbf{M}$ defined in \eqref{eq:matrix_wave}.}
\label{fig:eigenvalues_synthetic}
\end{figure}

\begin{figure}[h]
\centering
\subfigure[Laplacian 2\textsuperscript{nd} eigenvector]{\includegraphics[width=.4\linewidth]{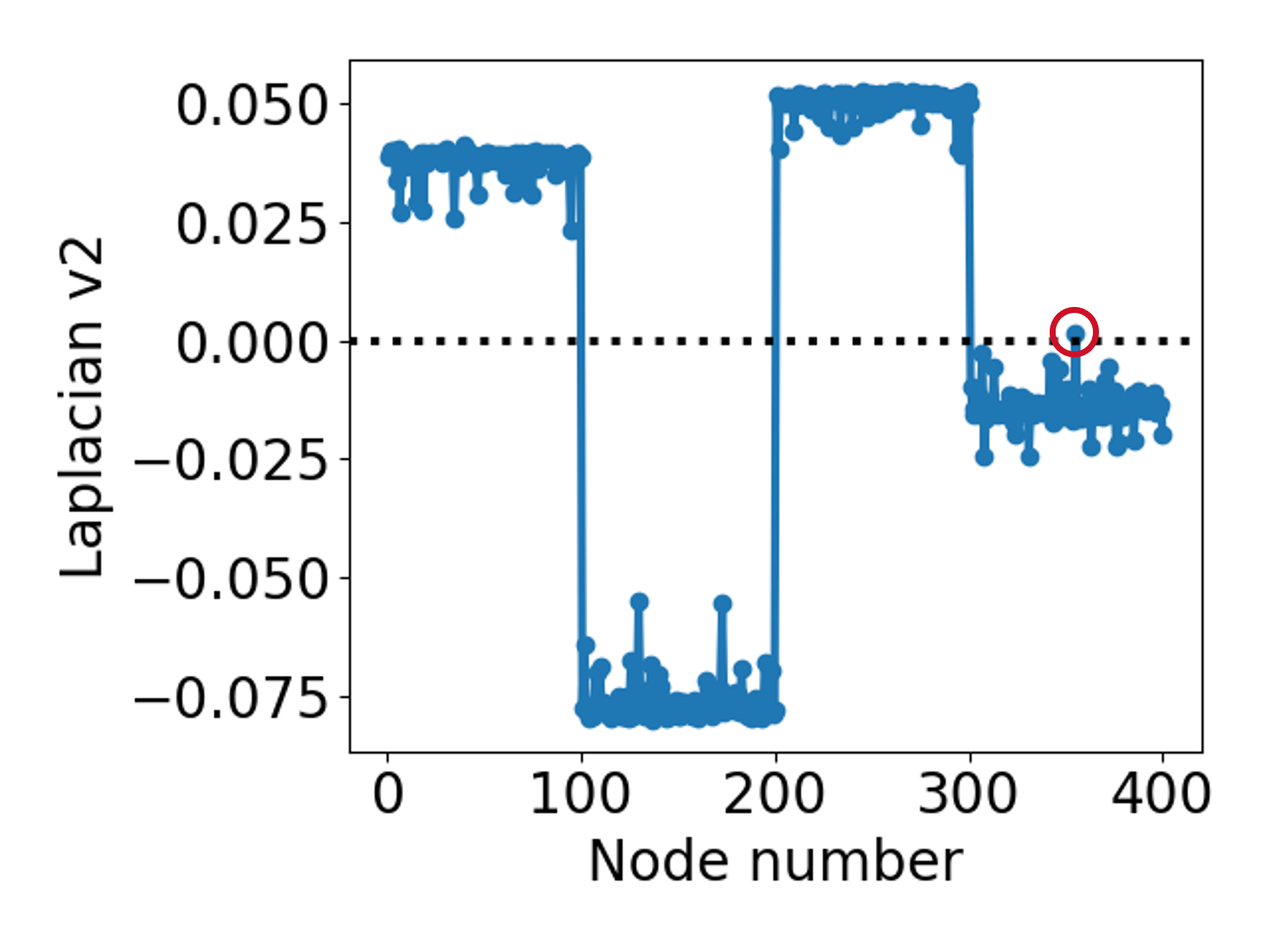}}
\subfigure[DMD$(\mathbf{X},\mathbf{Y})$]{\includegraphics[width=.4\linewidth]{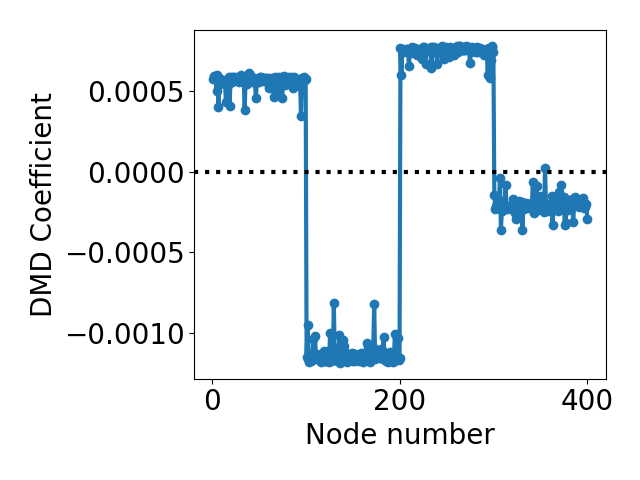}}
\caption{The second eigenvector of the synthetic graph. The signs of the eigenvector entries indicate assignment of the corresponding node to one of two clusters.}
\label{fig:synthetic_clustering_v2}
\end{figure}

\begin{figure}[h]
\centering
\subfigure[Laplacian 3\textsuperscript{rd} eigenvector]{\includegraphics[width=.4\linewidth]{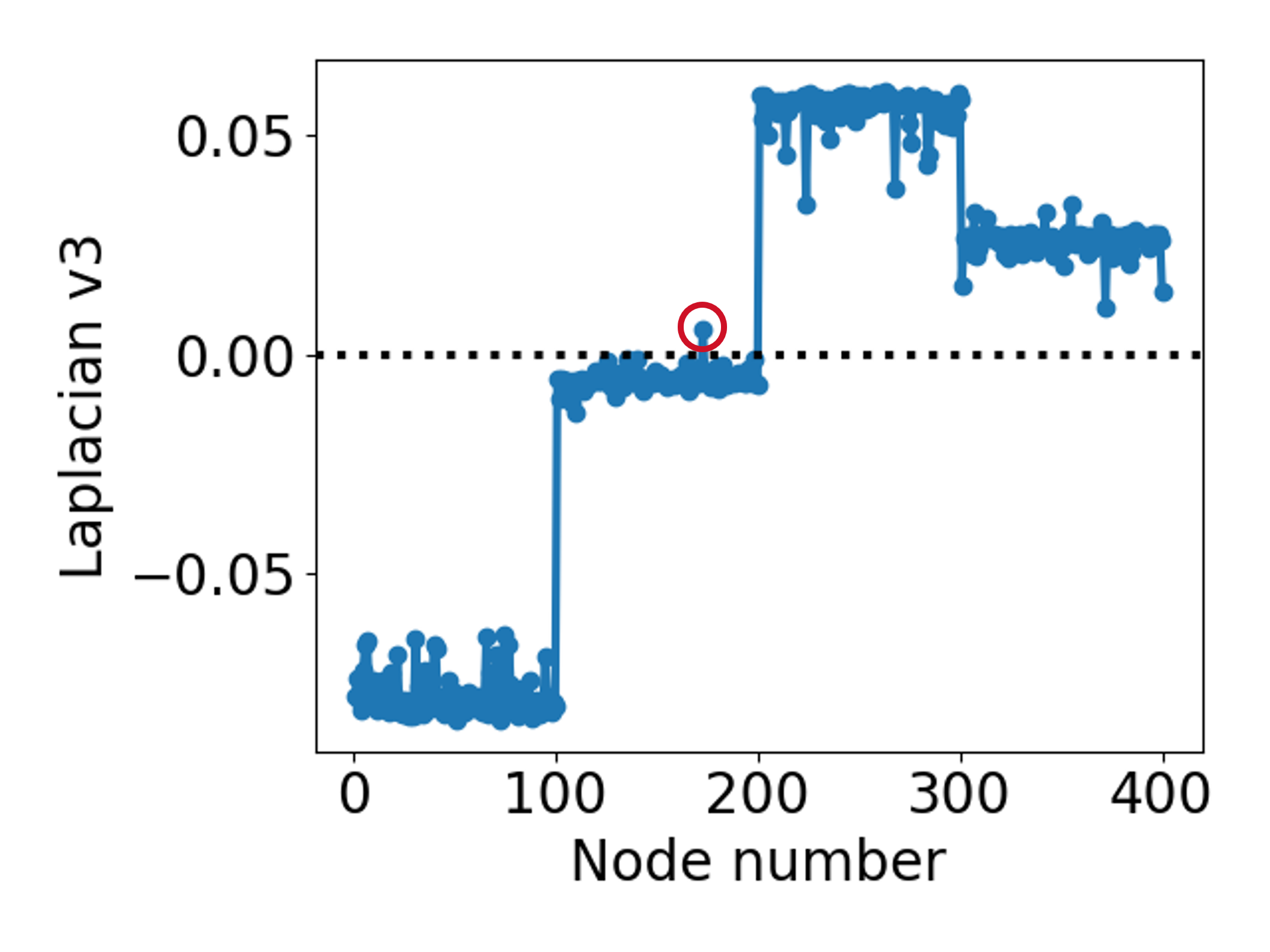}}
\subfigure[DMD$(\mathbf{X},\mathbf{Y})$]{\includegraphics[width=.4\linewidth]{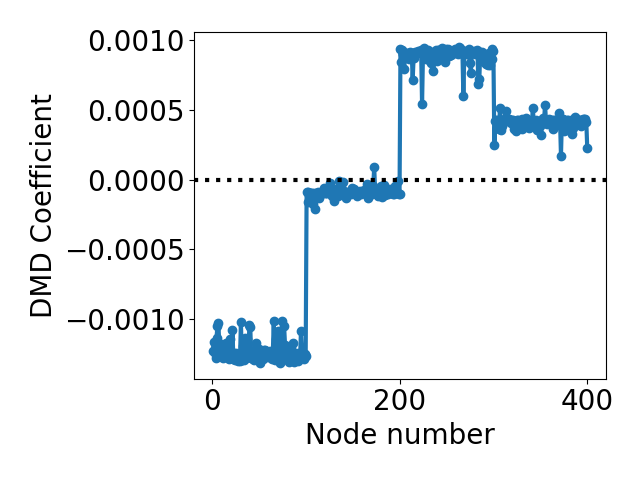}}
\caption{The third eigenvector of the synthetic graph. The signs of the eigenvector entries combined with those of the second eigenvector indicate assignment of the corresponding node to one of four clusters}
\label{fig:synthetic_clustering_v3}
\end{figure}

\subsection{A Facebook network}

In this example, we demonstrate our algorithm on ego networks in publicly available Facebook data~\cite{snapnets} with 4,039 nodes and 88,234 edges. We construct reduced matrices $\mathbf{X}$ and $\mathbf{Y}$ with size $600\times5400$ as the input for DMD. The spectral gap between $\omega_7$ and $\omega_8$, as shown in Fig.~\ref{fig:facebook}(a), indicates that the number of clusters is seven. Using our decentralized DMD based wave equation clustering approach with $k$-means gives the same clustering result as the spectral clustering method as shown in Fig.~\ref{fig:facebook}(b), thereby demonstrating the utility of our proposed approach on computing clusters in real-world networks.

\begin{figure}
\centering
\subfigure[Eigenvalues]{\includegraphics[width=.4\linewidth]{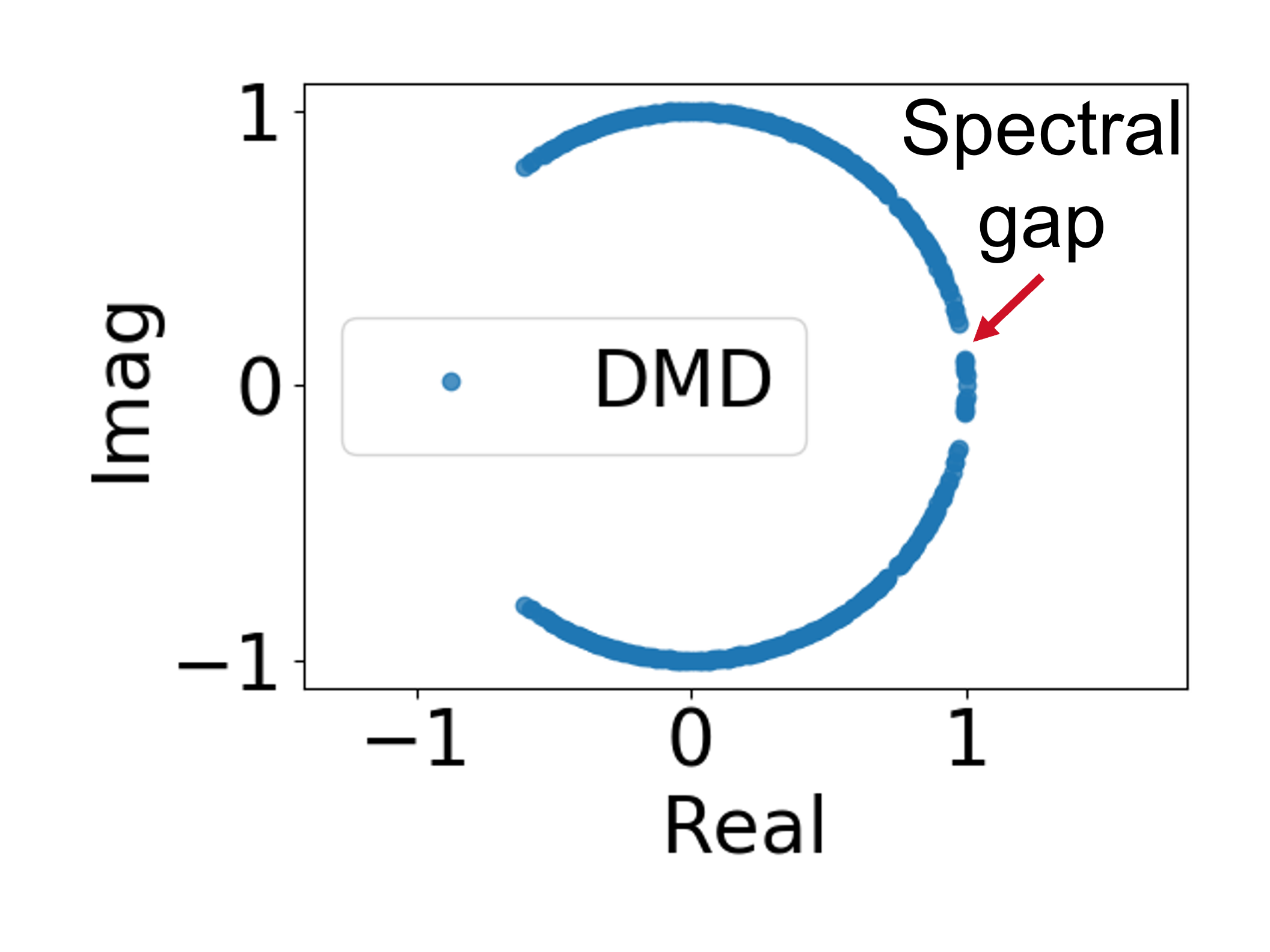}}
\subfigure[Clustering]{\includegraphics[width=.5\linewidth]{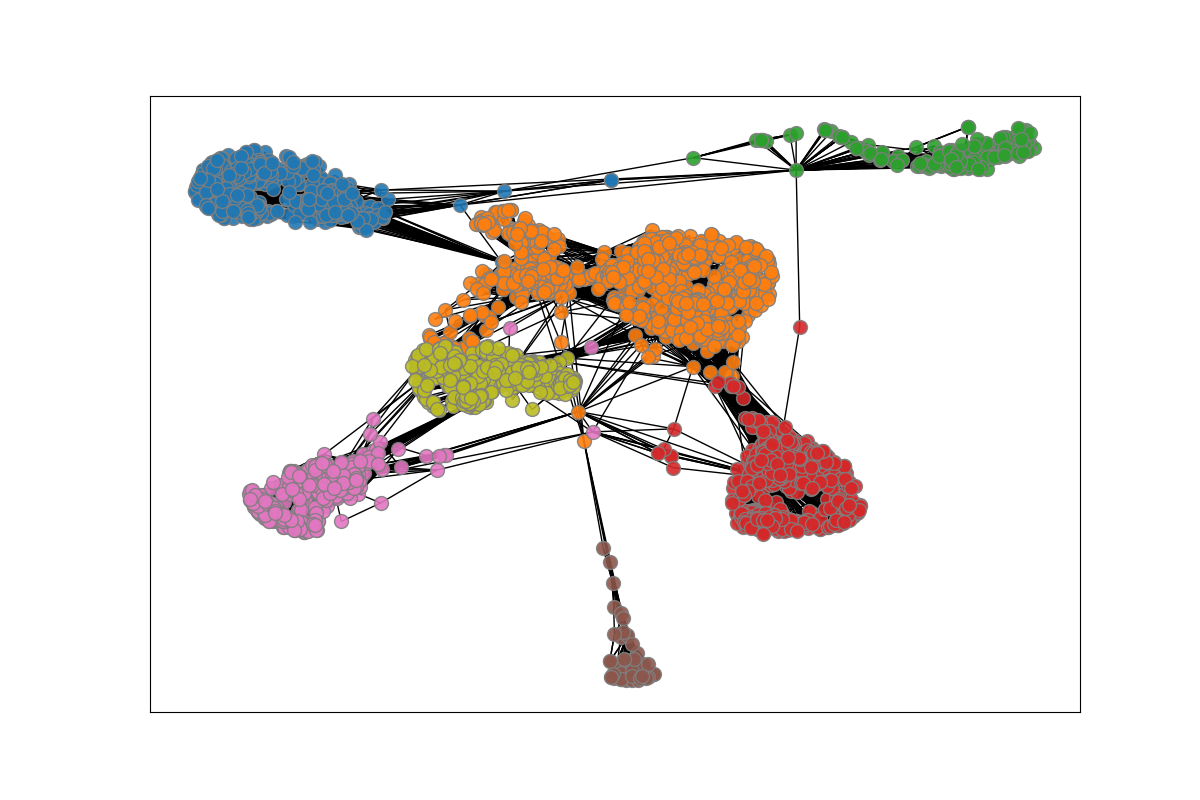}}
\caption{Clustering of a Facebook network. The spectral gap between $\omega_7$ and $\omega_8$ indicates that the number of clusters is $7$.}
\label{fig:facebook}
\end{figure}

\section{Comparison with FFT based wave equation clustering}\label{sec:comparison}
In this section, we compare the performance of wave equation clustering using DMD with the existing FFT version. In particular, we compare the minimum $T_{max}$ (number of wave equation updates) for DMD and FFT required to compute the same assignment as centralized spectral clustering as shown in Table~\ref{tbl:Tmax}. For FFT, a threshold $1\%$ in magnitude is used to identify the frequencies and the corresponding coefficients. DMD based clustering converges over 20 times faster when compared to the existing FFT based approach.

Fig.~\ref{fig:freq_error} shows the mean relative error in the lowest frequency $\omega_2$ with $T_{max}\in \{64, 128, 256, 512, 1024 \}$ at all the nodes for the Zachary Karate club graph.
Note that the resolution of frequencies computed by FFT is $2/T_{max}$. Sufficiently large $T_{max}$ guarantees the accuracy of computed FFT frequencies assuming an appropriate threshold. As evident from the plot, our DMD approach provides \emph{orders of magnitude} improvement over existing FFT based schemes.

\begin{figure}[h]
\centering
\includegraphics[width=.45\linewidth]{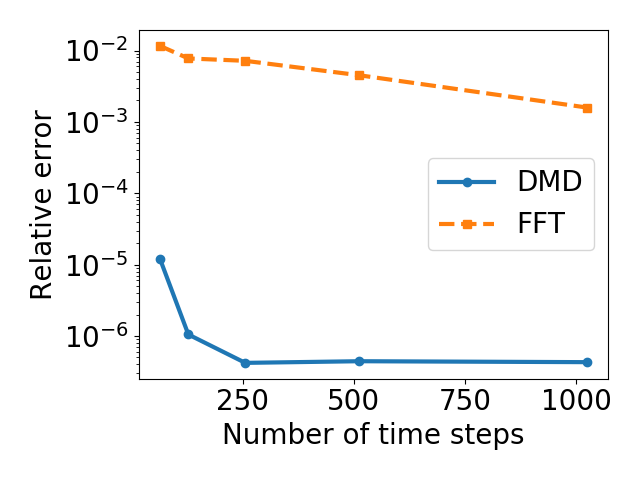}
\caption{Mean relative error of the lowest frequency averaged over the nodes, $\frac{1}{N}\sum_{k=1}^N|\hat{\omega}_2^{(k)} - \omega_2|/\omega_2$, where $\hat{\omega}_2^{(k)}$ is the FFT computed value at Node~$k$ for the Zachary Karate club example.}
\label{fig:freq_error}
\end{figure}

\begin{table}
\caption{Least $T_{max}$ required to recover the same assignment as spectral clustering.}
\centering
\begin{tabular}{ c c c c}
 \hline
 Algorithm &  Karate & Synthetic  & Facebook\\
 \hline
 FFT & 1024$^a$ (1000$^b$) & $>16,384^a$ (16,000$^b$) & $>131,072^a$ \\  
 DMD & 50 & 500 & 6000 \\
 \hline
\end{tabular}\\[1ex]
$^a$ Keep $T_{max}$ to be a power of 2. 
$^b$ Increase by 100.
\label{tbl:Tmax}
\end{table}

\section{Conclusions}\label{sec:conclusion}
In this work, we have developed a novel algorithm for computing balanced clusters in undirected graphs by exploiting dynamics on graphs~\cite{sahai2020dynamical}. In particular, we developed a new decentralized graph clustering method based on the evolution of the wave equation followed by a local dynamic mode decomposition (DMD) step. At each node, one can compute cluster assignments by exchanging information with nearest neighbors. The proposed method addresses computational issues associated with existing fast Fourier transform (FFT) based approaches. The decentralized algorithm is demonstrated on a line graph where all the eigenvalues can be computed exactly.  We also presented results on a synthetic example (with known number of clusters) and a real world social network. In all cases, we demonstrated that the clustering predicted by the decentralized approach is the same as standard spectral techniques.

We observed that the dominant eigenvalues can be computed using a reduced number of snapshots if the number of clusters is low, thus requiring the first few eigenvectors and eigenvalues. 
In practice, one can start with a small number of snapshots and then increase it till the dominant frequencies converged.
In particular, we demonstrated that the DMD approach reduces the relative error and required number of wave equation time steps by orders of magnitude. This improvement becomes more pronounced with the size of the graph.

We note that the accurate estimation of the spectral properties of a graph has applications beyond cluster computation. For instance, the spectral properties of a graph are related to its diameter~\cite{chung1997spectral}, pagerank~\cite{page1999pagerank}, and isomorphism testing~\cite{klus2018spectral} to name a few. In future work, we plan to extend our Koopman framework to a range of distributed graph applications that require the computation of spectral properties of a graph. 

\bibliographystyle{IEEEtran}

\begin{thebibliography}{10}
\providecommand{\url}[1]{#1}
\csname url@rmstyle\endcsname
\providecommand{\newblock}{\relax}
\providecommand{\bibinfo}[2]{#2}
\providecommand\BIBentrySTDinterwordspacing{\spaceskip=0pt\relax}
\providecommand\BIBentryALTinterwordstretchfactor{4}
\providecommand\BIBentryALTinterwordspacing{\spaceskip=\fontdimen2\font plus
\BIBentryALTinterwordstretchfactor\fontdimen3\font minus
  \fontdimen4\font\relax}
\providecommand\BIBforeignlanguage[2]{{%
\expandafter\ifx\csname l@#1\endcsname\relax
\typeout{** WARNING: IEEEtran.bst: No hyphenation pattern has been}%
\typeout{** loaded for the language `#1'. Using the pattern for}%
\typeout{** the default language instead.}%
\else
\language=\csname l@#1\endcsname
\fi
#2}}

\bibitem{sahai2010wave}
T.~Sahai, A.~Speranzon, and A.~Banaszuk, ``Wave equation based algorithm for
  distributed eigenvector computation,'' in \emph{49th IEEE Conference on
  Decision and Control (CDC)}.\hskip 1em plus 0.5em minus 0.4em\relax IEEE,
  2010, pp. 7308--7315.

\bibitem{SahaiSperanzonBanaszuk2011}
T.~Sahai, A.~Speranzon, and A.~Banaszuk, ``Hearing the clusters of a graph: A distributed algorithm,''
  \emph{Automatica}, vol.~48, no.~1, pp. 15--24, 2012.

\bibitem{ghiasi2002optimal}
S.~Ghiasi, A.~Srivastava, X.~Yang, and M.~Sarrafzadeh, ``Optimal energy aware
  clustering in sensor networks,'' \emph{Sensors}, vol.~2, no.~7, pp. 258--269,
  2002.

\bibitem{kottak2015cultural}
C.~P. Kottak, \emph{Cultural anthropology: Appreciating cultural
  diversity}.\hskip 1em plus 0.5em minus 0.4em\relax McGraw-Hill Education,
  2015.

\bibitem{mezic2019spectral}
I.~Mezi{\'c}, V.~A. Fonoberov, M.~Fonoberova, and T.~Sahai, ``Spectral
  complexity of directed graphs and application to structural decomposition,''
  \emph{Complexity}, vol. 2019, 2019.

\bibitem{speer2005functional}
N.~Speer, H.~Frohlich, C.~Spieth, and A.~Zell, ``Functional grouping of genes
  using spectral clustering and gene ontology,'' in \emph{Proceedings. 2005
  IEEE International Joint Conference on Neural Networks, 2005.}, vol.~1.\hskip
  1em plus 0.5em minus 0.4em\relax IEEE, 2005, pp. 298--303.

\bibitem{girvan2002community}
M.~Girvan and M.~E. Newman, ``Community structure in social and biological
  networks,'' \emph{Proceedings of the national academy of sciences}, vol.~99,
  no.~12, pp. 7821--7826, 2002.

\bibitem{klus2011efficient}
S.~Klus, T.~Sahai, C.~Liu, and M.~Dellnitz, ``An efficient algorithm for the
  parallel solution of high-dimensional differential equations,'' \emph{Journal
  of computational and applied mathematics}, vol. 235, no.~9, pp. 3053--3062,
  2011.

\bibitem{surana2012iterative}
A.~Surana, T.~Sahai, and A.~Banaszuk, ``Iterative methods for scalable
  uncertainty quantification in complex networks,'' \emph{International Journal
  for Uncertainty Quantification}, vol.~2, no.~4, 2012.

\bibitem{banaszuk2011scalable}
A.~Banaszuk, V.~A. Fonoberov, T.~A. Frewen, M.~Kobilarov, G.~Mathew, I.~Mezic,
  A.~Pinto, T.~Sahai, H.~Sane, A.~Speranzon, \emph{et~al.}, ``Scalable approach
  to uncertainty quantification and robust design of interconnected dynamical
  systems,'' \emph{Annual Reviews in Control}, vol.~35, no.~1, pp. 77--98,
  2011.

\bibitem{von2007tutorial}
U.~Von~Luxburg, ``A tutorial on spectral clustering,'' \emph{Statistics and
  computing}, vol.~17, no.~4, pp. 395--416, 2007.

\bibitem{nussbaumer1981fast}
H.~J. Nussbaumer, ``The fast fourier transform,'' in \emph{Fast Fourier
  Transform and Convolution Algorithms}.\hskip 1em plus 0.5em minus 0.4em\relax
  Springer, 1981, pp. 80--111.

\bibitem{DandachVoting2012}
S.~Dandach, A.~Speranzon, and T.~Sahai, ``A voting method for distributed
  spectral estimation in large graphs,'' \emph{Internal Technical Report at the
  United Technologies Research Center}, 2012.

\bibitem{tu2013dynamic}
J.~H. Tu, C.~W. Rowley, D.~M. Luchtenburg, S.~L. Brunton, and J.~N. Kutz, ``On
  dynamic mode decomposition: Theory and applications,'' \emph{Journal of
  Computational Dynamics}, vol.~1, no.~2, pp. 391--421, 2014.

\bibitem{KNKWKSN18}
S.~Klus, F.~N\"uske, P.~Koltai, H.~Wu, I.~Kevrekidis, C.~Sch\"utte, and
  F.~No\'e, ``Data-driven model reduction and transfer operator
  approximation,'' \emph{Journal of Nonlinear Science}, vol.~28, pp. 985--1010,
  2018.

\bibitem{friedman2004wave}
J.~Friedman and J.-P. Tillich, ``Wave equations for graphs and the edge-based
  {L}aplacian,'' \emph{Pacific Journal of Mathematics}, vol. 216, no.~2, pp.
  229--266, 2004.

\bibitem{zachary1977information}
W.~W. Zachary, ``An information flow model for conflict and fission in small
  groups,'' \emph{Journal of anthropological research}, vol.~33, no.~4, pp.
  452--473, 1977.

\bibitem{pan2020structure}
S.~Pan and K.~Duraisamy, ``On the structure of time-delay embedding in linear
  models of non-linear dynamical systems,'' \emph{Chaos: An Interdisciplinary
  Journal of Nonlinear Science}, vol.~30, no.~7, p. 073135, 2020.

\bibitem{balcan2013distributed}
M.-F.~F. Balcan, S.~Ehrlich, and Y.~Liang, ``Distributed $ k $-means and $ k
  $-median clustering on general topologies,'' \emph{Advances in neural
  information processing systems}, vol.~26, 2013.

\bibitem{snapnets}
J.~Leskovec and A.~Krevl, ``{SNAP Datasets}: {Stanford} large network dataset
  collection,'' \url{http://snap.stanford.edu/data}, June 2014.

\bibitem{sahai2020dynamical}
T.~Sahai, ``Dynamical systems theory and algorithms for np-hard problems,'' in
  \emph{Advances in Dynamics, Optimization and Computation: A volume dedicated
  to Michael Dellnitz on the occasion of his 60th birthday}.\hskip 1em plus
  0.5em minus 0.4em\relax Springer, 2020, pp. 183--206.

\bibitem{chung1997spectral}
F.~C. Graham, \emph{Spectral graph theory}.\hskip 1em plus 0.5em minus
  0.4em\relax American Mathematical Soc., 1997, no.~92.

\bibitem{page1999pagerank}
L.~Page, S.~Brin, R.~Motwani, and T.~Winograd, ``The pagerank citation ranking:
  Bringing order to the web.'' Stanford InfoLab, Tech. Rep., 1999.

\bibitem{klus2018spectral}
S.~Klus and T.~Sahai, ``A spectral assignment approach for the graph
  isomorphism problem,'' \emph{Information and Inference: A Journal of the
  IMA}, vol.~7, no.~4, pp. 689--706, 2018.

\end{thebibliography}

\end{document}